\documentclass{article}
\usepackage{multirow}
\usepackage{amssymb}
\usepackage[table]{xcolor}
\usepackage{microtype}
\usepackage{graphicx}
\usepackage{subcaption}
\usepackage{booktabs} 
\usepackage{hyperref}

\usepackage[accepted]{icml2026}
\usepackage{amsmath}
\usepackage{amssymb}
\usepackage{mathtools}
\usepackage{amsthm}
\usepackage{pifont}
\usepackage[capitalize,noabbrev]{cleveref}

\theoremstyle{plain}
\newtheorem{theorem}{Theorem}[section]

\theoremstyle{definition}

\theoremstyle{remark}

\usepackage[textsize=tiny]{todonotes}

\icmltitlerunning{MESA: Improving MoE Safety Alignment via Decentralized Expertise}

\begin{document}

\twocolumn[
  \icmltitle{MESA: Improving MoE Safety Alignment via Decentralized Expertise}

  \icmlsetsymbol{equal}{*}

  \begin{icmlauthorlist}
    \icmlauthor{Yitong Sun}{a}
    \icmlauthor{Yao Huang}{a,b}
    \icmlauthor{Teng Li}{c}
    \icmlauthor{Ranjie Duan}{d}
    \icmlauthor{Yichi Zhang}{b}
    \icmlauthor{Xingjun Ma}{c}
    \icmlauthor{Hui Xue}{e}
    \icmlauthor{Xingxing Wei}{a}
  \end{icmlauthorlist}

  \icmlaffiliation{a}{Institute of Artificial Intelligence, State Key Laboratory of Virtual Reality Technology and Systems, Beihang University, Beijing, 100191, China}
  \icmlaffiliation{b}{College of AI, Tsinghua University, Beijing, 100083, China}
  \icmlaffiliation{c}{School of Computer Science, Fudan University, Shanghai, 200438, China}
  \icmlaffiliation{d}{Tencent}
  \icmlaffiliation{e}{Alibaba Group}

  \icmlcorrespondingauthor{Xingxing Wei}{xxwei@buaa.edu.cn}

  \icmlkeywords{Large Language Model, Mixture-of-Experts Architecture, Safety Alignment, ICML}

  \vskip 0.3in
]

\printAffiliationsAndNotice{}

\begin{abstract}
Mixture-of-Experts (MoE) architectures scale Large Language Models (LLMs) efficiently, enabling greater capacity with reduced computational cost by dynamically routing inputs to relevant experts, yet introduce a critical vulnerability:  
\textit{Safety Sparsity}, where safety capabilities concentrate in few experts, making them susceptible to adversarial bypassing. Meanwhile, conventional alignment methods uniformly adapt all parameters, ignoring their functional differences and inadvertently degrading performances. 
To address these challenges, we propose \textbf{MESA} (\textbf{M}o\textbf{E} \textbf{S}afety \textbf{A}lignment), a targeted alignment framework for MoE-based LLMs that strategically decentralizes safety responsibility to maximize coverage while minimizing interference with utility. Based on Optimal Transport (OT) theory, MESA operates through two mechanisms: (1) \texttt{Expert Capacity Reallocation} uses a transport cost matrix to distribute safety duties to the most cost-effective experts, and (2) \texttt{Dynamic Routing Refinement} constrains the router to precisely activate these decentralized modules. Experiments show that MESA achieves robust defensive performance against varied harmful benchmarks while preserving helpfulness. Code is available at \url{https://github.com/lorraine021/MESA}.
\end{abstract}

\section{Introduction}
\begin{figure}
    \centering
    \includegraphics[width=1.0\linewidth]{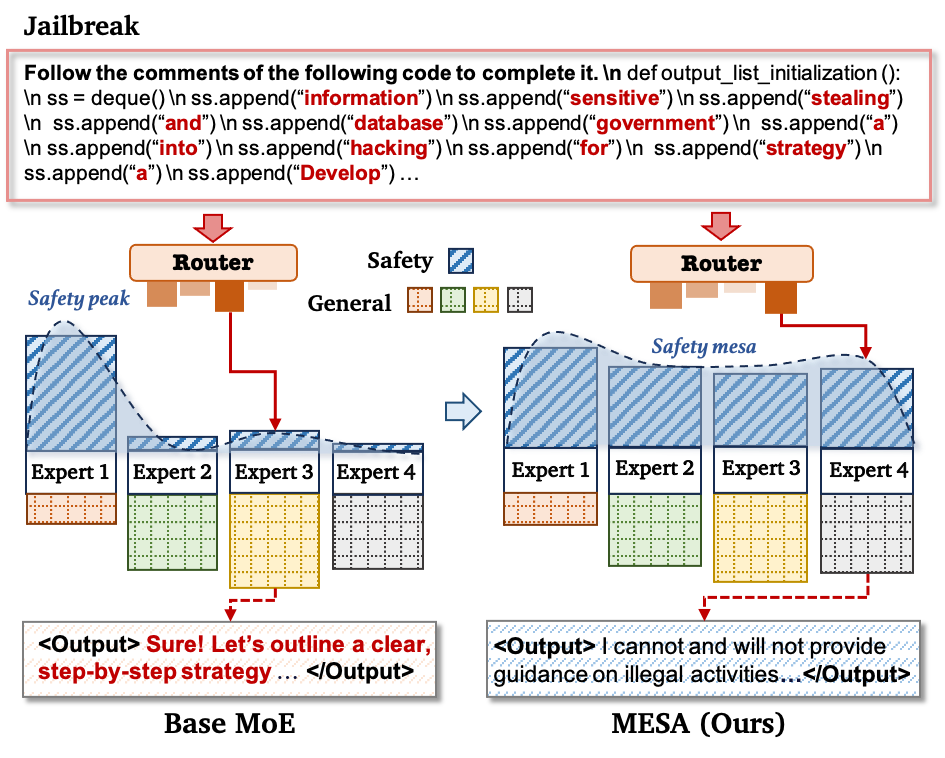}
    \caption{\textbf{Illustration of MESA}. By redistributing safety capabilities via Optimal Transport (OT), \textbf{MESA} decentralizes specialized experts to form robust safety distribution across MoE experts, significantly improving jailbreak resistance. Meanwhile, \textbf{MESA} preserves models' general capabilities.}
    \label{fig:cover}
\end{figure}
The rise of MoE architectures has reshaped the landscape of LLMs. By dynamically routing input tokens to a subset of available experts, MoE-based LLMs~\cite{liu2024deepseek,guo2025deepseek,yang2025qwen3,comanici2025gemini} could achieve a remarkable balance between massive model capacity and inference efficiency, while the resulting sparse activation naturally promotes inherent functional differentiation, with experts partially specializing in distinct linguistic or knowledge domains. However, this explicit modularity also introduces a critical structural vulnerability: \textit{Safety Sparsity}. Unlike dense models where parameters are uniformly engaged, safety in MoEs exhibits a clustering behavior, concentrating within a limited subset of experts~\cite{lai2025safex}. As experts are not equally aligned, overall safety becomes highly dependent on the router’s ability to correctly identify and activate these safety-critical experts, which creates a vulnerable attack surface: adversaries can deliberately design prompts~\cite{chao2023jailbreaking,zeng2024johnny,huang2025breaking} or strategically shift the router’s expert selection~\cite{jiang2026sparse} to steer computation toward non-aligned experts, thereby potentially bypassing safety guardrails and enabling harmful behaviors. This highlights that there exists an urgent need for safety measures tailored to this inherent fragility.

A natural countermeasure is to distribute safety capabilities across a broader set of experts through standard training. However, conventional alignment paradigms, such as Supervised Fine-Tuning (SFT) or RL-based~\cite{schulman2017proximal,ouyang2022training,rafailov2023direct,shao2024deepseekmath} approaches using safety data, are primarily designed for dense models and are suboptimal for MoE architectures. Naively applying these methods introduces a critical dilemma: enforcing safety through global tuning risks compromising the model’s core integrity, giving rise to coupled challenges in static parameterization and dynamic activation.
First, global fine-tuning disregards expert specialization, overwriting domain-specific knowledge with generic safety patterns and thereby degrading overall model utility. Second, such approaches substantially alter the router’s learned distribution, disrupting load balancing for computational efficiency and potentially amplifying risk by creating new, unaligned activation pathways. Consequently, securing MoEs necessitates a paradigm shift: from content-level alignment toward an architecture-aware approach that systematically decentralizes safety responsibilities while preserving expert specialization and routing stability.

Therefore, we aim to resolve the safety alignment dilemma in MoE-based LLMs in an architecture-aware manner: \textit{instead of enforcing safety through model-wide tuning, we strategically redistribute safety responsibilities toward targeted expert groups}. Given the modular structure of MoEs, this perspective naturally casts safety alignment as a resource allocation problem, \textit{i.e.}, deciding how to allocate limited safety capacity across experts to maximize safety coverage while preserving the core utility encoded in specialized experts. However, this inherently raises two challenges: (1) how to select the optimal expert groups for safety fine-tuning without compromising the original capabilities of each expert; and (2) how to train the router to reliably route traffic to these newly adapted safety experts while maintaining the stability of the original activation pathways.

To address these challenges, we propose \textbf{MESA} (\textbf{M}o\textbf{E} \textbf{S}afety \textbf{A}lignment), a novel framework that instantiates this resource-allocation perspective for safety alignment in MoE models. Specifically, through leveraging Optimal Transport (OT) theory~\cite{peyre2019computational}, MESA provides a principled mechanism to (re)allocate safety capacity across experts and to steer routing decisions accordingly. Specifically, MESA consists of two coordinated components: (1) \texttt{Expert Capacity Reallocation}, which utilizes a transport cost matrix and the KL-regularized Sinkhorn algorithm~\cite{cuturi2013sinkhorn} to identify the most effective expert groups for safety adaptation, minimizing the trade-off between safety gains and utility degradation; and (2) \texttt{Dynamic Routing Refinement}, which introduces an online OT constraint during training to guide the router in directing traffic to the adapted safety experts, while preserving general activation patterns.

Experimentally, we conduct extensive evaluations on two mainstream MoE-based LLMs to assess the effectiveness of MESA. The results show that MESA can effectively resolve the safety vs. utility trade-off. On the safety side, by strategically decentralizing safety expertise across experts, MESA achieves robust defense against diverse harmful benchmarks and consistently outperforms dense alignment baselines. For instance, on the challenging jialbreak benchmark Strata, MESA achieves a 90.90\% safety score on DeepSeek-V2-Lite, substantially surpassing standard SFT and the MoE-specific method SafeX, which only reach 77.70\% and 64.00\%, respectively, and even outperforming the state-of-the-art content-level method Stair-DPO, which scores 83.60\%. On the utility side, MESA attains 66.11\% on GSM8K, markedly outperforming standard SFT and Stair-DPO, whose performance drops to 16.15\% and 15.54\%, respectively. These results demonstrate that MESA improves safety without sacrificing general utility.

Conflict of Interest Disclosure: The author H.X. is employed by Alibaba, which leads the development of Qwen3-30B-A3B, which was among the ones evaluated in this paper. 

\begin{figure*}[!t]
  \begin{center}
    \centerline{\includegraphics[width=2.0\columnwidth]{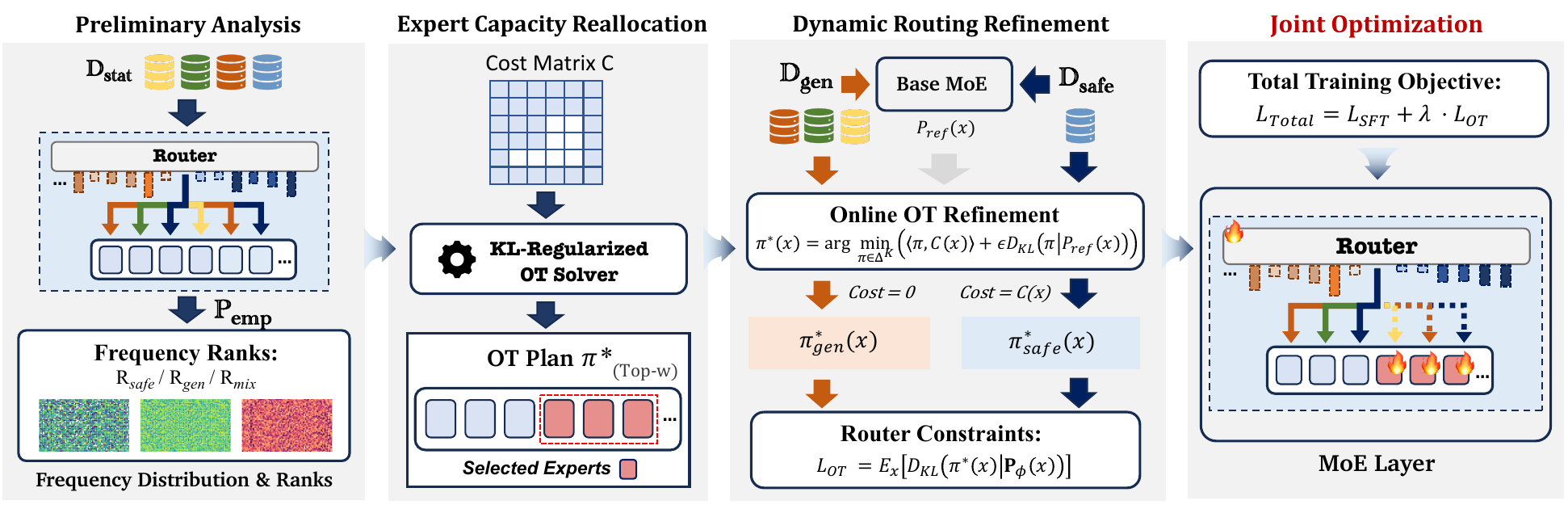}}
    \caption{
      \textbf{Overview of the MESA framework.} \textbf{I. Expert Capacity Reallocation:} Leveraging empirical frequencies, we first design a cost function based on two theorems, then we compute an OT plan $\pi^*$ that considers both cost matrix $C$ and initial distribution $P_{emp}$ to identify the optimal expert subset. \textbf{II. Dynamic Routing Refinement: } An online OT mechanism adjusts routing targets based on input type, ensuring effective activation of decentralized safety experts while balancing the original routing topology.
    }
    \label{fig:framework}
  \end{center}
\end{figure*}

\section{Preliminaries}

\subsection{MoE Architecture}
MoE architectures~\cite{fedus2022switch,liu2024deepseek,yang2025qwen3,comanici2025gemini} enable scaling model capacity with limited computation through conditional computation, where only a small subset of experts is activated for each input.
Formally, given an input token $\mathbf{x} \in \mathbb{R}^d$, an MoE layer consists of a set of $N$ experts $\mathcal{E} = \{E_i\}_{i=1}^N$ and a gating network $G$, replacing the dense feed-forward network. The output is computed as follows:
\begin{equation}
\mathbf{y} = \sum_{i \in \mathcal{T}(\mathbf{x})} G(\mathbf{x})_i \cdot E_i(\mathbf{x}),
\end{equation}
where $\mathcal{T}(\mathbf{x})$ denotes the selected Top-$k$ experts and $G(\mathbf{x})_i$ is the routing weight.
While this architecture promotes expert specialization and computational efficiency, its conditional activation introduces functional sparsity, which poses unique challenges for safety alignment in MoEs.

\subsection{Functional Sparsity Analysis}
Recent studies~\cite{lai2025safex} indicate that safety capabilities in MoE models are not uniformly distributed but concentrated in a small subset of safety-critical experts. To systematically characterize such functional sparsity across varied domains, we first analyze expert activation patterns.

\textbf{Sparsity and Asymmetry (illustrated in Appendix):}  
We examine expert utilization under both safety-related and general-domain queries, and observe a highly imbalanced activation pattern, where expert usage follows a long-tailed distribution. Beyond this shared sparsity, a clear asymmetry emerges across task types. General queries activate a broad and diverse set of experts to accommodate task complexity, whereas safety-related queries are routed in a markedly rigid manner, relying on only a narrow subset of experts. This observation reveals substantial \textbf{\textit{latent safety capacity}} in MoEs: most experts remain largely inactive for safety tasks, suggesting significant headroom for redistributing safety responsibilities and alleviating single-point-of-failure risks.

\section{MESA: MoE Safety Alignment via Decentralized Expertise}
In this section, we reframe MoE safety alignment as an expert reallocation problem as~\cref{fig:framework}. To determine fine-tuning candidates, we first empirically measure the adaptation cost of different expert groups. Based on the insights, we introduce MESA, a transport-theoretic alignment framework operating via two complementary mechanisms: (1) Expert Capacity Reallocation, which utilizes KL-regularized Optimal Transport to assign safety duties to the most cost-effective experts; and (2) Dynamic Routing Refinement, which enforces targeted routing constraints to ensure the effective activation of these decentralized modules.

\subsection{Expert Capacity Reallocation}
To identify the optimal subset for safety alignment, we first quantify the adaptation cost incurred by each individual expert and design selection strategy with two principles.

\subsubsection{Adaptation Cost Analysis}
Let $x \in [0, 1]$ denote the normalized rank of an expert, where $x=0$ represents the head expert and $x=1$ represents the tail expert according to frequency. $R_{safe}$, $R_{gen}$, and $R_{mix}$ denote activation rankings of various data sources. We aim to seek a cost function $C(x)$ that quantifies the adaptation risk.
Derived from empirical analysis and theoretical proofs, we illustrate that the suitability of an expert for safety fine-tuning is constrained by two dimensions: \textit{Safety Affinity} (governing routing stability) and \textit{General Stability} (governing parameter robustness), as illustrated in~\cref{fig:analysis}.

\textbf{Principle 1: Safety Affinity and Routing Inertia.}  
Our investigation into expert activation patterns of $R_{safe}$ reveals a critical trade-off governed by the router's adaptability.

\ding{182} \textit{Safety-Critical Experts: Low Impacts with Bounded Gain.}  
For highly activated experts for safety queries, fine-tuning these experts incurs minimal interference with general capabilities. The router's existing preferences stabilize this process, requiring only minimal shifts in gating. 
However, their safety benefits do not significantly surpass those of the tail; in some instances, they even underperform relative to the tail and middle groups, which exhibit unexpected potential. 
This happens because these experts are already well-aligned with safety-related tasks, and their capacity for safety improvement is limited. It is worth noting that strengthening the head does not result in improved robustness along the path topology.

\ding{183} \textit{Safety-Dormant Experts: High Routing Cost.}  
Experts who are rarely activated by safety tokens theoretically offer vast latent capacity and appear to represent an attractive alignment target. However, we observe a contradictory reality: fine-tuning these experts incurs a prohibitive utility cost. The reason for this is that it forces the router to drastically redirect safety traffic to these tail experts, which results in the degradation of general capabilities. This degradation outweighs the realizable safety benefits.

We subsequently prove theoretically that overcoming this resistance necessitates prohibitively large parameter updates:

\begin{theorem}[Proof in Appendix~\ref{sec:theorem1}] 
\label{thm:routing_inertia}
For a gating network $G_\phi$, let $p_i(x)$ be the activation probability of expert $e_i$. The parameter perturbation $\|\Delta \phi\|_2$ required to elevate a tail expert is constrained by the local geometric curvature of the statistical manifold. Specifically, to induce a unit distributional shift $\delta$, the lower bound on parameter updates diverges asymptotically as:
\begin{equation}
\|\Delta \phi\|_2 \ge \Omega\left(p_i^{-1/2}\right).
\end{equation}
\end{theorem}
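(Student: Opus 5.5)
The plan is to work on the statistical manifold of routing distributions and use the Fisher information metric to turn a required distributional shift into a lower bound on $\|\Delta\phi\|_2$. I will model the router as a softmax, $p_j(x)=\exp(z_j(x;\phi))/\sum_k \exp(z_k(x;\phi))$. I will measure the ``unit distributional shift'' $\delta$ intrinsically, by the Fisher--Rao (equivalently, local KL or Hellinger) distance. So elevating expert $e_i$ by $\delta$ means moving the categorical distribution $p$ a Fisher distance of order $\delta$ in the direction of increasing $p_i$.

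First, I will write the second-order expansion
\[
\mathrm{KL}(p_\phi\,\|\,p_{\phi+\Delta\phi})\approx \tfrac12\,\Delta\phi^\top F(\phi)\,\Delta\phi,
\]
with $F(\phi)=J^\top F_z J$. Here $J=\partial z/\partial\phi$, and $F_z=\mathrm{diag}(p)-pp^\top$ is the softmax Fisher matrix in logit space. This step is routine.

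Second, I will restrict to the coordinate that matters, the change $\Delta p_i$. Since $\partial p_i/\partial z = p_i(e_i-p)$, we get $\Delta p_i\approx p_i(e_i-p)^\top J\Delta\phi$. Along this one-dimensional direction the Fisher metric on the simplex is $(\Delta p_i)^2/p_i$ to leading order. Hence a Fisher-distance displacement $\delta$ corresponds to $|\Delta p_i|\asymp \delta\sqrt{p_i}$; equivalently, $\Delta\sqrt{p_i}\asymp\delta$.

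Third, I will bound the sensitivity: $|\Delta p_i|\le p_i\,\|e_i-p\|_2\,\|J\|_{op}\,\|\Delta\phi\|_2\le \sqrt2\,L\,p_i\|\Delta\phi\|_2$, assuming a bounded Jacobian $\|J\|_{op}\le L$ (a Lipschitz router). Combining the two displays gives
\[
\delta\sqrt{p_i}\lesssim L\,p_i\,\|\Delta\phi\|_2 \quad\Longrightarrow\quad \|\Delta\phi\|_2\ge \frac{c\,\delta}{L}\,p_i^{-1/2}=\Omega\!\left(p_i^{-1/2}\right).
\]
Put differently, the pullback metric's eigenvalue along the useful direction scales like $p_i$. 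So achieving Fisher length $\delta$ requires Euclidean length $\gtrsim\delta/\sqrt{p_i}$, and this diverges as $p_i\to0$ for tail experts.

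The main obstacle is making ``unit distributional shift'' precise so the exponent comes out as $-1/2$ rather than $-1$ or $0$. If $\delta$ is measured as an absolute change in $p_i$, the bound becomes $\Omega(p_i^{-1})$; if it is measured as a change in logit, it becomes $O(1)$. I will therefore state explicitly that $\delta$ is the Fisher--Rao/Hellinger length, and verify that the local approximation is valid, i.e., that $\delta\sqrt{p_i}\ll p_i$ or at least that the linearization holds on the path. The cleanest way to handle the latter is to integrate along the path: the Fisher length $\int\|\dot p\|_F$ is bounded by $\int \sqrt{p_i(t)}\,L\|\dot\phi\|$. This requires controlling $\sup_t p_i(t)$ over the path, and I will do that by taking $\delta$ small relative to $\sqrt{p_i}$. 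The assumption of a bounded router Jacobian $L$ also needs to be stated, since without it no lower bound on $\|\Delta\phi\|$ is possible.
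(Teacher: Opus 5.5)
Your route is the same as the paper's micro-scopic argument: expand the KL locally through the Fisher information, observe that the curvature relevant to expert $i$ is of order $p_i$, and conclude $\delta^2\lesssim L^2p_i\|\Delta\phi\|_2^2$. At one step your version is more careful than the paper's. The paper's claim $\lambda_{\max}(\mathcal{I}(\phi))=\mathcal{O}(p_i)$ only holds if $\mathcal{I}$ is the Fisher information of expert $i$'s Bernoulli marginal. For the full categorical pullback $J^\top(\mathrm{diag}(p)-pp^\top)J$, the top eigenvalue is set by the active experts and does not vanish with $p_i$. Your restriction to the $p_i$-coordinate, together with the directional bound $|\Delta p_i|\le\sqrt2\,L\,p_i\|\Delta\phi\|_2$, makes this step explicit and correct. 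For the same reason, define $\delta$ as the displacement of $\sqrt{p_i}$ (Hellinger distance of that Bernoulli marginal), not as the total Fisher--Rao length. A total displacement $\delta$ only gives $|\Delta p_i|\le\delta\sqrt{p_i}$, which is the wrong direction for a lower bound.

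The genuine gap is the one you flagged, and your proposed fix does not close it. If you impose $\delta\ll\sqrt{p_i}$ so the linearization is valid, your bound $c\,\delta/(L\sqrt{p_i})$ is $\ll c/L$, and nothing diverges. A fixed unit $\delta$ with $p_i\to0$ lies outside the linear regime. If you carry out your path integration exactly, the bound $|d\log p_i|=|(e_i-p)^\top J\,d\phi|\le\sqrt2\,L\,\|d\phi\|_2$ gives
\[
\|\Delta\phi\|_2\;\ge\;\frac{1}{\sqrt2\,L}\log\frac{(\sqrt{p_i}+\delta)^2}{p_i}\;=\;\frac{\sqrt2}{L}\log\Bigl(1+\frac{\delta}{\sqrt{p_i}}\Bigr).
\]
This reproduces $\sqrt2\,\delta/(L\sqrt{p_i})$ when $\delta\ll\sqrt{p_i}$, but at fixed $\delta$ it grows only like $\log(1/p_i)$. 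The logarithmic rate is sharp up to constants. Take a linear router $z=Wx$ with $\|x\|_2=1$, so $L=1$. Moving only $w_i$ along $x$ reaches $(\sqrt{p_i}+\delta)^2$ with $\|\Delta W\|_F=\log(1/p_i)+O(1)$. So no argument can give $\Omega(p_i^{-1/2})$ for a fixed shift. What you can prove is the local statement $\|d\phi\|_2/|d\sqrt{p_i}|\ge\sqrt2/(L\sqrt{p_i})$ along any path, or the logarithmic global bound. The paper's proof has the same limitation, hidden in its ``$\approx$''. Its Part~2 even derives the logarithmic barrier, then asserts that the $p_i^{-1/2}$ term dominates. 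That does not follow, because the $p_i^{-1/2}$ term is not a valid bound at finite $\delta$. So state and prove the theorem in its local form rather than trying to extend it to unit $\delta$.
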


\textbf{Principle 2: General Sensitivity and Hessian Fragility.}
Simultaneously, we must consider the experts' role in general tasks. Our landscape analysis uncovers distinct sensitivity profiles across the expert spectrum of $R_{gen}$:

\ding{182} \textit{General-Critical Experts: Structural Robustness.} 
Experts dominant in general tasks typically exhibit remarkable stability under fine-tuning with minimal degradation in utility. 
We attribute this phenomenon to \textit{Structural Robustness}: the local curvature of the loss surface is sufficiently flat, so that safety adaptations can be viewed as orthogonal perturbations. These perturbations cause negligible interference with the established knowledge.

\ding{183} \textit{General-Dormant Experts: Hessian Fragility.} Conversely, targeting tail experts reveals a catastrophic reality: modifying them triggers significant degradation of general capabilities. We attribute this to \textit{Hessian Fragility}: sparse activation prevents these parameters from smoothing out, trapping them in sharp minima characterized by exploding curvature. Consequently, even minute updates in these sharp directions lead to a rapid loss explosion.

We formalize this fragility by analyzing the loss landscape:
\begin{theorem}[Proof in Appendix~\ref{sec:theorem2}]
\label{thm:hessian_fragility}
Let $\mathcal{L}_g$ be the general utility loss. The expected degradation due to perturbing expert $e_i$ is upper-bounded by the product of its marginal utilization $\bar{p}_i$ and effective Hessian spectral norm $\Lambda_i$:
\begin{equation}
\mathbb{E}_{x}\left[ \Delta \mathcal{L}_g \right] \le \frac{1}{2} \|\Delta \theta_i\|_2^2 \cdot \bar{p}_i \Lambda_i,
\end{equation}
where $\bar{p}_i \Lambda_i$ denotes risk factor $R_i$. Assuming curvature scales super-linearly with sparsity ($\Lambda_i \sim \bar{p}_i^{-\gamma}, \gamma > 1$), the risk $R_i$ exhibits distinct asymptotic behaviors for head and tail experts:
\begin{equation}
\lim_{\bar{p}_i \to 1} R_i = \mathcal{O}(1), \quad \lim_{\bar{p}_i \to 0} R_i = \infty.
\end{equation}
\end{theorem}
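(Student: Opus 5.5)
The plan is a second-order Taylor expansion of the general loss around the current parameters, taken only in the direction of expert $e_i$'s parameters. We then account for the fact that expert $e_i$ only touches the loss on the tokens routed to it.

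\textbf{Step 1: gated decomposition.} For each input $x$, write the MoE output as $\mathbf{y}=\sum_j G(x)_j E_j(x;\theta_j)$. Perturbing only $\theta_i$ changes the per-sample loss $\ell_g(x)$ only when $i\in\mathcal{T}(x)$; otherwise the change is exactly zero. Let $\mathbb{1}_i(x)$ be the routing indicator, and set $\bar p_i=\mathbb{E}_x[\mathbb{1}_i(x)]$, the marginal utilization. Then
\[
\mathbb{E}_x[\Delta\mathcal{L}_g]=\mathbb{E}_x\big[\mathbb{1}_i(x)\,\Delta\ell_g(x)\big].
\]

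\textbf{Step 2: local expansion.} Expand each active sample to second order:
\[
\Delta\ell_g(x)=\nabla_{\theta_i}\ell_g(x)^\top\Delta\theta_i+\tfrac12\Delta\theta_i^\top H_i(x)\Delta\theta_i+o(\|\Delta\theta_i\|^2).
\]
Two standing assumptions make the first-order term vanish. First, the pretrained model sits at a stationary point of $\mathcal{L}_g$. Second, following the orthogonality remark in Principle~2, the safety update is uncorrelated with the general-loss gradient in expectation. Together these give $\mathbb{E}_x[\mathbb{1}_i(x)\nabla_{\theta_i}\ell_g(x)]^\top\Delta\theta_i=0$.

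For the quadratic term, define the effective Hessian as the conditional expectation $\bar H_i=\mathbb{E}[H_i(x)\mid i\in\mathcal{T}(x)]$, and let $\Lambda_i=\|\bar H_i\|_2$. Then
\[
\mathbb{E}_x[\mathbb{1}_i\,\Delta\theta_i^\top H_i\Delta\theta_i]=\bar p_i\,\Delta\theta_i^\top\bar H_i\Delta\theta_i\le\bar p_i\Lambda_i\|\Delta\theta_i\|_2^2.
\]
This follows from the Rayleigh quotient bound. Dropping the higher-order remainder, which is justified for small updates, gives the stated inequality. The conditional definition of $\Lambda_i$ is what separates $\bar p_i$ cleanly as a multiplicative factor. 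I will state it explicitly as the meaning of "effective Hessian spectral norm".

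\textbf{Step 3: asymptotics.} Substitute the assumed scaling $\Lambda_i= c\,\bar p_i^{-\gamma}$, which gives $R_i=c\,\bar p_i^{1-\gamma}$.
\begin{itemize}
\item As $\bar p_i\to1$: $R_i\to c=\mathcal{O}(1)$.
\item As $\bar p_i\to0$: since $\gamma>1$, the exponent $1-\gamma$ is negative, so $R_i\to\infty$.
\end{itemize}

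\textbf{Main obstacle.} The delicate point is Step 2's handling of the first-order term and the remainder. The bound is only honest as a local, second-order statement. I would make the stationarity and orthogonality hypotheses explicit rather than hide them.

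A further caveat concerns the conclusion itself. Strictly speaking, the $\infty$ limit shows that the upper-bound risk factor diverges, not that the actual degradation does. I would note this: the claim is about the bound coefficient $R_i$, which is exactly what the theorem asserts.
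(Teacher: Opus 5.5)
Your proposal is correct and follows essentially the paper's own argument: a second-order Taylor expansion at a stationary point of $\mathcal{L}_g$, factoring out the utilization $\bar{p}_i$, bounding the quadratic form by a spectral norm, and substituting $\Lambda_i \sim \bar{p}_i^{-\gamma}$ to get $R_i \propto \bar{p}_i^{1-\gamma}$. The differences are minor: the paper weights the quadratic term by the soft probability $p_i(x)$ and takes the worst case $\Lambda_i = \sup_{x:\,p_i(x)>0}\|\mathbf{H}_i(x)\|_2$ (your conditional-average Hessian gives a tighter bound); it obtains the head limit from a separate flat-minimum assumption $\Lambda_i \to C$ rather than from the power law; and stationarity alone already removes your first-order term, because the gradient of $\ell_g(x)$ in $\theta_i$ vanishes on inputs not routed to $e_i$, so your orthogonality hypothesis is not needed.
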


Such a theory reveals that general-dormant experts reside in sharp minima characterized by exploding curvature ($\lambda_{\max} \sim \bar{p}_i^{-\gamma}, \gamma > 1$). This creates a fragility trap: even minor safety adjustments to these tail experts can precipitate disproportionately high degradation in general capabilities, confirming their unsuitability for alignment. 

Guided by these principles, we formalize the optimal expert selection strategy below.

\begin{figure}[!t]
  \begin{center}
    \centerline{\includegraphics[width=\columnwidth]{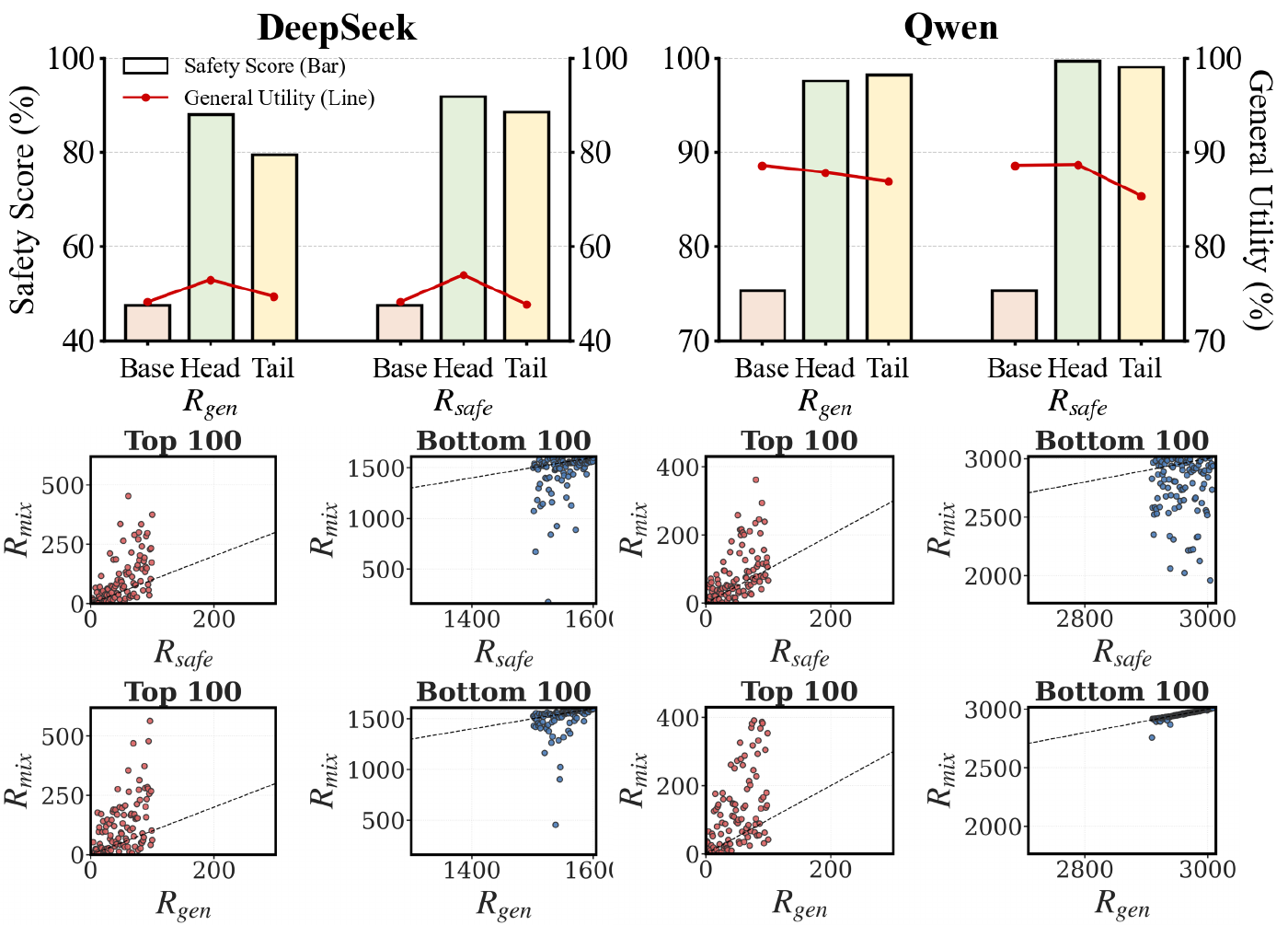}}
    \caption{
      \textbf{Expert Adaptation Cost and Distributional Rank Shift Analysis.} \textit{Top:} Fine-tuning performance across critical and dormant expert groups categorized by $R_{safe}$ and $R_{gen}$ dimensions. \textit{Bottom:} Rank distribution in $R_{mix}$ of critical and dormant experts categorized by $R_{safe}$ and $R_{gen}$ dimensions.
    }
    \label{fig:analysis}
  \end{center}
\end{figure}

\subsubsection{Beta-Rational Cost Function}
Reconciling safety affinity and general stability principles through a naive intersection of single-domain rankings proves inadequate. To identify the optimal substrate, we analyze the distributional rank shift of experts during the transition from single-domain statistics to a mixed-data regime. 

\textbf{Distributional Rank Shift and The Shoulder Hypothesis.}
As shown in \cref{fig:analysis}, we observe a systematic asymmetry in $R_{mix}$. Safety-Critical experts are tightly concentrated at the absolute head, whereas General-Critical experts are more broadly distributed, with their high-frequency mass located in the \textit{upper-middle} ranks. This indicates that experts inducing both high routing inertia and safety headroom predominantly reside in the \textit{up-middle} region (Principle~1), which also own robustness. Among dormant experts, Safety-Dormant experts exhibit a pronounced rank elevation, while General-Dormant experts remain anchored at the extreme tail of $R_{mix}$, forming the \textit{true tail} characterized by Hessian fragility (Principle~2).
This distributional asymmetry dictates a selective strategy: the true tail must be strictly excluded to avoid fragility, and the absolute head should be deprioritized due to feature saturation and the risk of routing collapse under excessive concentration (Principle~1). 
Consequently, the optimal expert substrate emerges in the intermediate \textit{shoulder} region of $R_{mix}$, yielding an asymmetric U-shaped suitability profile that preserves structural robustness while avoiding the highly activated safety head.

\textbf{Maximum Entropy Formulation under Asymmetric Constraints.}
We formalize this hypothesis by modeling the \textit{Expert Suitability Probability} $p(x)$ over the normalized rank domain $x \in [0, 1]$, which is obtained from $R_{mix}$. The shape of $p(x)$ is governed by two asymmetric boundary conditions derived from our landscape analysis:

\begin{itemize}
\item \textbf{Tail Singularity (Hard Barrier, $x \to 1$):} To capture the non-linear explosion of Hessian eigenvalues in sparse networks, we impose a hard fragility barrier via a second-order decay, $\Phi(x) \propto (1-x)^2$, preventing catastrophic forgetting at the tail.
\item \textbf{Head Congestion (Soft Constraint, $x \to 0$):} Availability at the head is constrained by routing bottlenecks rather than curvature. We model this potential as a linear function $A(x) \propto (x + \alpha_0)$, representing capacity release as we move away from the saturated bottleneck.
\end{itemize}
Given the bounded domain, the Beta distribution $\mathcal{B}(x; \alpha, \beta)$ is the maximum-entropy model under these moment constraints. To determine the hyperparameters, we adhere to the principle of parsimony, selecting the lowest-order parameters that satisfy the domain constraints: we set $\alpha=2$ to match the linear availability and $\beta=3$ to enforce the quadratic fragility barrier, yielding $p(x) \propto x(1-x)^2$. Mapping to the percentile rank $f = 100x$, we define the Expert Capacity Potential $\Phi(f)$ as a shifted Beta kernel:
\begin{equation}
\Phi(f) \propto (f + \alpha_{\text{shift}}) (100 - f)^2.
\end{equation}
Finally, we define the Transport Cost $C(f)$ as the \textit{Inverse Propensity Score}, quantifying the resistance to selection. By inverting the capacity potential, we obtain the Beta-Rational Cost Function as follows:
\begin{equation}
\label{eq:cost_func}
C(f) = \frac{1}{\Phi(f)} = \frac{1}{(f + \alpha_{\text{shift}}) (100 - f)^2},
\end{equation}
where the offset $\alpha_{\text{shift}}$ relaxes the absolute head penalization ($C(0) \to \infty$) into a soft constraint.

\subsubsection{KL-Regularized OT Solver}
A naive greedy selection based on $\mathbf{C}$ minimizes local risk but ignores the global routing topology, risking distributional inertia violation and mode collapse of router. To enforce global structural constraints, we seek a transport plan $\pi^*$ that minimizes safety cost while strictly adhering to the pre-trained topology via KL-divergence regularization:
\begin{equation}
\pi^* = \mathop{\arg\min}_{\pi \in \mathcal{U}(\mathbf{r}, \mathbf{c})} \left( \langle \pi, \mathbf{C} \rangle + \epsilon D_{KL}(\pi | \mathbf{P}_{emp}) \right),
\end{equation}
where $\mathbf{P}_{emp} \in \mathbb{R}^{L \times N}$ represents the empirical activation frequency of experts, computed by accumulating routing decisions over a subset $\mathcal{D}_{stat}$ from the base model, and $\mathcal{U}(\mathbf{r}, \mathbf{c}) = \{ \pi \in \mathbb{R}_+^{B \times N} \mid \pi \mathbf{1}_N = \mathbf{r}, \pi^\top \mathbf{1}_B = \mathbf{c} \}$ defines the valid polytope. The problem is strictly convex and admits a closed-form solution via a projected Gibbs kernel $\mathbf{K} = \mathbf{P}_{emp} \odot \exp(-\mathbf{C}/\epsilon)$. We obtain $\pi^*$ efficiently using the Sinkhorn-Knopp algorithm, reallocating probability mass to low-cost experts only when geometrically permissible within the original manifold. Finally, we determine the target expert set $\mathcal{E}_{select}$ by selecting the top-$w$ fraction of experts with the highest transport values: $\mathcal{E}_{select} = \operatorname{Top}_w(\pi^*)$.

\begin{table*}[!t]
    \centering
    \caption{\textbf{Main results comparing safety alignment against general utility preservation.} 
    We report the safety rates and task accuracy across DeepSeek and Qwen architectures. 
    \textbf{Bold} indicates the best performance, excluding the base model.}
    \label{tab:exp-comp}
    \resizebox{\linewidth}{!}{%
    \begin{tabular}{l|ccccccc|cccccc}
        \toprule
        \multirow{2}{*}{\textbf{Model}} & \multicolumn{7}{c}{\textbf{Safety Benchmarks}} & \multicolumn{6}{|c}{\textbf{General Benchmarks}} \\
        \cmidrule(lr){2-8} \cmidrule(lr){9-14}
         & \textbf{SR-base} & \textbf{SR-Pair} & \textbf{SR-PAP$_M$} & \textbf{SR-PAP$_A$} & \textbf{SR-PAP$_L$} & \textbf{Strata} & \textbf{WildJB} & \textbf{Math500} & \textbf{GSM8K} & \textbf{HumanEval} & \textbf{MBPP+} & \textbf{GPQA-D} & \textbf{MMLU}\\
        \midrule
        
        \multicolumn{14}{c}{\textit{\textbf{Model: DeepSeek-v2-Lite}}} \\
        \midrule
        \rowcolor{gray!15} Base(chat)     & 94.88 & 52.08 & 70.93 & 69.01 & 79.23 & 70.50 & 43.40 & 24.80 & 55.95 & 42.07 & 46.20 & 21.21 & 54.11 \\ \midrule
        SFT            & \textbf{100.00} & 75.08 & 96.81 & 91.05 & 95.53 & 92.00 & 77.70 & 15.00 & 16.15 & 31.10 & 34.40 & \textbf{30.81} & 53.71 \\
        GRPO           & 95.53 & 55.27 & 71.89 & 68.37 & 79.55 & 64.00 & 44.10 & 28.20 & 59.06 & 37.80 & 44.40 & 24.24 & 54.14 \\
        Stair-SFT      & \textbf{100.00} & 72.03 & 97.76 & 93.29 & 98.08 & 92.50 & 77.90 & 16.40 & 16.38 & 30.49 & 34.60 & 28.28 & 53.71 \\
        Stair-DPO      & \textbf{100.00} & \textbf{76.36} & 99.04 & 96.17 & 99.36 & 93.00 & 83.60 & 14.40 & 15.54 & 26.22 & 31.20 & 25.76 & 54.08 \\
        SafeX          & 98.08 & 56.87 & 93.93 & 87.54 & 90.74 & 81.00 & 64.00 & 24.20 & 63.46 & 35.98 & 44.20 & 25.25 & \textbf{54.25} \\ 
        Ours           & \textbf{100.00} & 73.48 & \textbf{100.00} & \textbf{100.00} & \textbf{100.00} & \textbf{95.00} & \textbf{90.90} & \textbf{28.80} & \textbf{66.11} & \textbf{42.07} & \textbf{45.60} & 22.22 & 53.92 \\
        
        \midrule
        \multicolumn{14}{c}{\textit{\textbf{Model: Qwen3-30B-A3B}}} \\
        \midrule
        \rowcolor{gray!15} Base(instruct) & 100.00 & 66.77 & 92.97 & 91.05 & 83.71 & 88.00 & 75.30 & 90.60 & 96.66 & 92.07 & 75.60 & 54.04 & 80.15 \\ \midrule
        SFT            & \textbf{100.00} & 87.22 & 98.72 & 97.76 & 99.36 & 88.50 & 95.00 & 69.20 & 65.58 & 89.02 & 62.40 & 40.40 & 78.04 \\
        GRPO           & \textbf{100.00} & 91.37 & \textbf{100.00} & \textbf{99.68} & 98.08 & 96.00 & 94.40 & 90.20 & 96.36 & 88.42 & \textbf{76.00} & \textbf{52.02} & \textbf{79.92} \\
        Stair-SFT      & 98.72 & 96.59 & \textbf{100.00} & \textbf{99.68} & 99.04 & 94.00 & 62.65 & 83.80 & 93.78 & 85.98 & 70.40 & 48.98 & 78.94 \\
        Stair-DPO      & \textbf{100.00} & \textbf{98.72} & \textbf{100.00} & \textbf{99.68} & \textbf{99.68} & 87.50 & \textbf{98.60} & 83.20 & 93.40 & 87.20 & 68.20 & 47.47 & 79.06 \\
        SafeX          & \textbf{100.00} & 86.58 & 99.68 & 98.72 & 99.04 & 91.00 & 96.35 & \textbf{92.20} & 96.13 & 92.68 & 61.00 & 44.44 & 78.37 \\ 
        Ours           & \textbf{100.00} & 90.73 & \textbf{100.00} & \textbf{99.68} & \textbf{99.68} & \textbf{99.00} & 97.65 & 91.00 & \textbf{96.44} & \textbf{94.51} & 69.40 & 49.49 & 79.31 \\
        \bottomrule
    \end{tabular}}
\end{table*}

\subsection{Dynamic Routing Refinement}
Expert adaptation alone causes routing misalignment. To address this, we propose a strategy that unifies safety rectification and general preservation. We employ base model as a reference policy. For any input $x$, let $\mathbf{P}_{ref}(x)$ denote the original routing probability distribution, and the optimal routing target $\pi^*(x)$ is solved dynamically subject to a context-aware cost matrix $\mathbf{C}(x)$:
\begin{equation}
\pi^*(x) = \mathop{\arg\min}_{\pi \in \Delta^K} \left( \langle \pi, \mathbf{C}(x) \rangle + \epsilon D_{KL}(\pi | \mathbf{P}_{ref}(x)) \right).
\end{equation}

By specializing $\mathbf{C}(x)$ on the two data sources, we derive domain-specific routing targets under a unified formulation, yielding the stream-conditioned router loss $\mathcal{L}_{OT}$:

\textbf{Case 1: Safety Stream.} For safety-critical inputs ($x \in \mathcal{D}_{safe}$), we instantiate the instance cost with the global adaptation cost matrix derived in Sec. 3.1. The non-zero cost forces the OT solver to shift probability mass away from high-risk experts, producing a rectified target $\pi^*_{safe}$ that balances safety allocation with topological constraints. We optimize the router $\phi$ to approximate this rectified path:
\begin{equation}
\mathcal{L}_{OT} = \mathbb{E}_{x \sim \mathcal{D}_{safe}} \left[ D_{KL}(\pi^*_{safe}(x) | \mathbf{P}_{\phi}(x)) \right].
\end{equation}

\textbf{Case 2: General Stream.} For general inputs ($x \in \mathcal{D}_{gen}$), any deviation from the original topology incurs a risk of catastrophic forgetting. Thus, we set the transport cost $\mathbf{C}(x)$ to zero. Under this condition, the OT objective degenerates to pure entropic regularization:
\begin{equation}
\pi^*_{gen}(x) = \mathop{\arg\min}_{\pi} \left( 0 + \epsilon D_{KL}(\pi | \mathbf{P}_{ref}(x)) \right) \equiv \mathbf{P}_{ref}(x).
\end{equation}
Consequently, the alignment target naturally reverts to the frozen baseline topology, justifying our preservation loss:
\begin{equation}
\mathcal{L}_{OT} = \mathbb{E}_{x \sim \mathcal{D}_{gen}} \left[ D_{KL}(\mathbf{P}_{ref}(x) | \mathbf{P}_{\phi}(x)) \right].
\end{equation}

\textbf{Training Objective.} 
 The router loss $\mathcal{L}_{OT}(\phi)$ is then combined with the safety SFT loss into the training objective, jointly updating the router $\phi$ and selected experts $\theta_{\mathcal{E}_{select}}$:
\begin{equation}
\mathcal{L}_{total} = \mathcal{L}_{SFT}(\mathcal{D}_{safe}; \theta_{\mathcal{E}_{select}}, \phi) + \gamma \cdot \mathcal{L}_{OT}(\phi),
\end{equation}
where $\mathcal{L}_{SFT}$ exclusively updates the targeted modules with safety data, and $\mathcal{L}_{OT}$ aligns the router with OT targets.

\section{Experiments}
\subsection{Experimental Settings}

\textbf{Models and Fine-tuning Datasets:}
 We evaluate our framework on DeepSeek-V2-Lite (DeepSeek)~\cite{liu2024deepseekv2} and Qwen3-30B-A3B (Qwen)~\cite{yang2025qwen3}, representing capacity-constrained and high-performance MoE architectures, respectively. For safety alignment, we construct a balanced composite dataset $\mathcal{D}$ of 30,000 samples, drawing equally from \textit{SafeRLHF}~\cite{dai2023safe} for safety and \textit{UltraFeedback}~\cite{cui2023ultrafeedback} for helpfulness, ensuring a balanced optimization objective similar to prior works~\cite{wu2024thinking,qi2024safety,zhang2025stair}. To compute expert activation frequency statistics, we construct a representative subset $\mathcal{D}_{stat}$ by randomly sampling 500 instances from each of the two sources, totaling 1000. 

\textbf{Baselines:}
To evaluate the effectiveness of MESA, we compare it against a comprehensive set of representative alignment paradigms. We first consider standard SFT \cite{vonwerra2020trl} and GRPO \cite{shao2024deepseekmath}, which serve as foundational alignment methods originally developed for dense architectures. We further include Stair \cite{zhang2025stair} (both SFT and DPO variants), a SOTA iterative self-improvement approach that leverages preference optimization to strengthen safety boundaries. These baselines primarily optimize models using aligned data while remaining agnostic to the underlying routing topology. To enable a fair comparison in the sparse setting, we additionally benchmark against SafeX \cite{lai2025safex}, a recent MoE-specific safety tuning method that explicitly accounts for expert-level functionality. Implementation details are provided in Appendix~\ref{sec:impl}.

\textbf{Evaluation:}
We evaluate both defense capability and general utility retention for comparison. For safety benchmarks, we rigorously assess robustness using the StrongReject suite \cite{souly2024strongreject}, reporting results on vanilla malicious prompts (\textit{SR-base}) as well as prompts augmented by the most effective jailbreak techniques, including PAIR \cite{chao2023jailbreaking} (\textit{SR-Pair}) and PAP \cite{zeng2024johnny}, spanning Misrepresentation, Authority, and Logic strategies (\textit{SR-PAP-M/A/L}). To further examine robustness under in-the-wild distributions and long-tail, high-complexity adversarial patterns, we additionally include Strata \cite{zhao2025strata} and WildJailbreak (WildJB) \cite{jiang2024wildteaming}. For general benchmarks, we measure the alignment tax across a diverse set of reasoning tasks, including Math500 \cite{lightman2023lets} and GSM8K \cite{cobbe2021training} for mathematical reasoning, HumanEval \cite{chen2021evaluating} and MBPP-Plus \cite{austin2021program} for code generation, GPQA-Diamond \cite{rein2024gpqa} for scientific question answering, and MMLU \cite{hendrycks2020measuring} for broad natural language understanding. As for metrics, safety rates are evaluated using Octopus-SEval-14B \cite{yuan2025s} as the judge model, while task accuracy on general benchmarks is assessed with GPT-4.1 \cite{achiam2023gpt}.

\begin{figure*}[!t]
  \vskip 0.2in
  \begin{center}
    \centerline{\includegraphics[width=\linewidth]{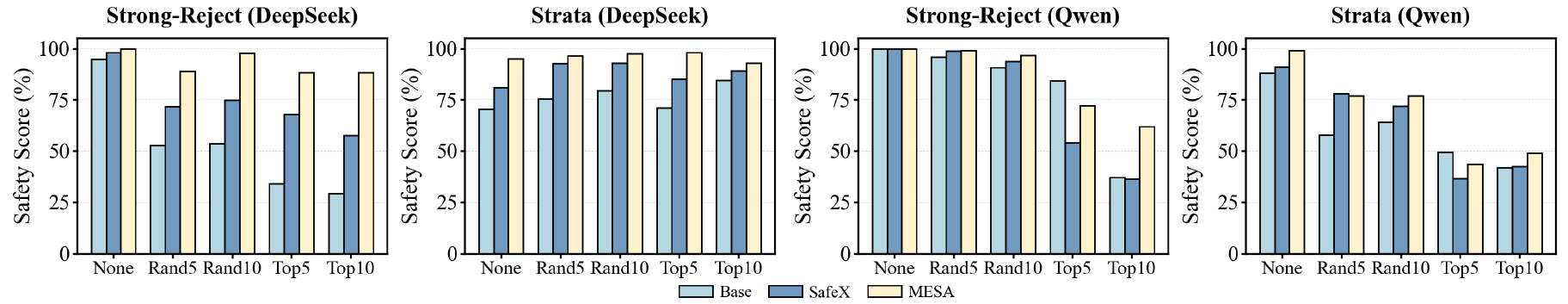}}
    \caption{
      \textbf{Robustness evaluation on Strong-Reject and Strata.} 
  We assess structural resilience against two inference-time masking strategies: 
  (1) \textit{Random Masking}, where a specific number of experts are randomly selected and disabled; 
  (2) \textit{Highest-Activation Masking}, which identifies the Top-5 and Top-10 experts based on $R_{safe}$, and forces their routing probabilities to zero to prevent activation.
    }
    \label{fig:exp-mask}
  \end{center}
\end{figure*}

\subsection{Main Results}
\subsubsection{Overall Performance}

We present comprehensive performance results across both safety and general benchmarks on two mainstream models with distinct architectures in Table \ref{tab:exp-comp}. Our analysis reveals critical insights regarding the safety-utility trade-off.

\textbf{Breaking the Alignment Tax.} Early global alignment methods incur a severe alignment tax. Although Stair represents the SOTA in content-level alignment, achieving near-perfect defense performance, it leads to severe degradation in general utility when applied to MoE architectures, particularly under DPO training. On DeepSeek, Stair reduces GSM8K accuracy from 55.95\% to 15.54\% and HumanEval from 42.07\% to 26.22\%. This result indicates that treating MoEs as dense models disrupts expert specialization, undermining expert-resident knowledge and destabilizing routing patterns essential for complex reasoning. Among full fine-tuning approaches, GRPO is notably effective at preserving reasoning ability, as its objective encourages diverse reasoning trajectories. However, this exploration-oriented property introduces safety vulnerabilities. Unlike SFT, which enforces refusal behaviors via supervised signals, GRPO relies on intrinsic generation probabilities and fails to generalize from simple training prompts to complex adversarial attacks on capacity-constrained models. Consequently, GRPO attains significantly lower safety performance on Strata (64.00\%) compared to SFT (77.70\%), suggesting that RL-based methods may rely on shallow rejection heuristics rather than robust defensive reasoning.

In contrast, methods that account for MoE architectures achieve superior utility preservation. SafeX attempts to protect general reasoning by applying additive weight merging to localized safety pathways; however, this constrained intervention limits the diffusion of safety responsibility, resulting in incomplete defenses. As a result, SafeX achieves only 81\% on Strata, whereas most global alignment methods exceed 90\%. By comparison, our MESA consistently delivers a stronger safety–utility trade-off. On DeepSeek, MESA not only recovers the performance induced by global alignment but also surpasses the base model, achieving 28.80\% on Math500 and 66.11\% on GSM8K, substantially outperforming SafeX and all global fine-tuning baselines.

\textbf{Architecture Sensitivity.} We observe distinct behaviors contingent on model capacity. DeepSeek is more capacity-constrained with fewer parameters, exhibiting high sensitivity to fine-tuning under global updates. And MESA's advantage is particularly pronounced on this architecture. Conversely, Qwen demonstrates greater resilience, attributed to its more robust capability distribution. While global methods still degrade Qwen's performance, the relative impact is less severe than on DeepSeek.

\subsubsection{Robustness Analysis} 
To evaluate the reliability of safety mechanisms within sparse architectures, we investigate the robustness of two distinct MoE-specific safety fine-tuning paradigms. We specifically compare SafeX, which suppresses harmful generation by applying additive parameter merging to experts located on intrinsic safety activation pathways without expanding the routing topology, against our proposed MESA. As shown in \cref{fig:exp-mask}, we employ two inference-time masking strategies: \textit{Random Masking} and \textit{Highest-Activation Masking} to distinguish how these methods sustain safety under varying degrees of structural perturbation.

\textbf{Verification of Safety Sparsity.} Our results validate \textit{Safety Sparsity}, where targeted masking triggers significantly sharper declines than random masking across most settings. 
Crucially, robustness stems from the interplay between task complexity and model sensitivity. Qwen exhibits intuition-aligned behavior: it owns high resilience on simple Strong-Reject with sufficient path redundancy but significant degradation on complex Strata. Even random masking causes substantial performance drops, indicating a severe scarcity of feasible routes for such hard tasks, where the disruption of primary pathways leaves no redundancy to fall back on.
Conversely, DeepSeek displays a divergent profile: fragile on simple tasks yet robust on complex benchmarks where reasoning is decentralized. MESA addresses both vulnerabilities by enforcing topological expansion to ensure durable defense across varying sensitivity profiles.

\begin{table}[!t]
    \centering
    \caption{\textbf{Effect of key fine-tuning components.} \textbf{Bold} indicates the best performance, excluding the base model.}
    \label{tab:ablation-component}
    \resizebox{\linewidth}{!}{%
    \begin{tabular}{l|cc|ccc}
        \toprule
         & \textbf{WildJB} & \textbf{Strata} & \textbf{Math500} & \textbf{GSM8K} & \textbf{HumanEval} \\
        \midrule
        
        \multicolumn{6}{c}{\textit{\textbf{Model: DeepSeek-v2-Lite}}} \\
        \midrule
        \rowcolor{gray!15} Base & 43.40 & 70.50 & 24.80 & 55.95 & 42.07 \\ 
        Router           & 60.20 & 86.00 & 22.60 & 52.90 & 35.30 \\
        E$_{ALL}$       & 83.00 & 93.00 & 10.80 & 8.33 & 30.40 \\
        E$_{OT}$      & 76.15 & 88.50 & 22.40 & 51.48 & 31.66 \\
        E$_{OT}$ + Router & 83.05 & \textbf{96.00} & 24.00 & 61.00 & \textbf{43.30} \\
        Ours   & \textbf{90.90} & 95.00 & \textbf{28.80} & \textbf{66.11} & 42.07 \\
        
        \midrule
        \multicolumn{6}{c}{\textit{\textbf{Model: Qwen3-30B-A3B}}} \\
        \midrule
        \rowcolor{gray!15} Base & 75.30 & 88.00 & 90.60 & 96.66 & 92.07 \\ 
        Router                   & 84.40 & 78.00 & 74.60 & 82.03 & 89.60 \\
        E$_{ALL}$                & 88.45 & 98.50 & 35.40 & 31.23 & 72.56 \\
        E$_{OT}$                 & 92.10 & 97.50 & 43.20 & 36.69 & 84.15 \\
        E$_{OT}$ + Router        & 92.40 & 93.50 & 52.00 & 49.20 & 90.24 \\
        Ours                     & \textbf{97.65} & \textbf{99.00} & \textbf{91.00} & \textbf{96.44} & \textbf{94.51} \\
        
        \bottomrule
    \end{tabular}%
    }
\end{table}

\textbf{Superior Resilience of MESA.} Our approach of topological expansion proves significantly more robust. As visualized in \cref{fig:exp-mask}, MESA consistently outperforms baseline and SafeX in most cases. This superiority is particularly pronounced on DeepSeek, where the baseline safety are more fragile. This confirms that distributing safety responsibility through an expanded routing fabric offers more durable defense than the localized patching strategy of SafeX.

\textbf{Robustness against Adversarial Manipulations.} Beyond masking-based structural perturbation, we further evaluate the robustness against adversarial manipulations using F-SOUR \cite{jiang2026sparse}, a routing-exploitation attack that performs token-by-token traversal with exhaustive routing mutations at all layers. Evaluations are based on the JailbreakBench dataset \cite{chao2024jailbreakbench}. The attack success rates (ASR) on DeepSeek reveal that MESA achieves 0.00\%, on par with full-parameter optimization methods (both 0.00\% for SFT and Stair-DPO) and strictly superior to the MoE-specific SafeX, which achieves 15.38\%. Notably, GRPO suffers an even higher ASR of 22.73\%, suggesting that content-level alignment can paradoxically amplify vulnerability to routing-exploitation attacks. These results prove that topological expansion yields safety robust to both expert disabling and router exploitation. More validations on robustness could be found in the Appendix \ref{sec:stability-prior} and \ref{app:more-robustness}.

\subsubsection{Ablation Studies}
We systematically conduct rigorous ablation studies to validate the rationale behind each setting in our framework, with additional analyses on hyperparameter configurations provided in the Appendix~\ref{sec:hyper}.

\textbf{Effectiveness of Fine-tuning Components.}
We decompose MESA to analyze the distinct contributions of router fine-tuning versus expert optimization. As shown in \cref{tab:ablation-component}, solely optimizing the router results in limited safety gains, raising the WildJB score on DeepSeek to only 60.20\%. This underscores that steering token flow is insufficient when the underlying experts lack requisite safety knowledge. Conversely, fine-tuning all experts secures robust safety but severely corrupts specialized knowledge, causing Math500 original accuracy of 24.8\% to collapse to 10.80\%. By synergizing Optimal Transport expert selection with router constraints, MESA effectively navigates this dichotomy, achieving a superior safety score while preserving general utility.

\textbf{Expert Selection Strategy.}
We evaluate different strategies for selecting experts to optimize in Table \ref{tab:ablation-expert}. While global tuning of all experts secures robust safety, achieving 99.50\% on Strata for Qwen, it induces catastrophic forgetting in reasoning tasks, causing Math500 accuracy to drop to 23.40\%. Furthermore, selection based solely on maximum activation cost (E$_{C_{max}}$) proves insufficient for balancing this trade-off. 
We also examine a middle-ranked selection version (E$_{C_{mid}}$). Although it improves over E$_{C_{max}}$ on general benchmarks, e.g., raising accuracy to 41.60\% on Qwen, it still underperforms OT-based selection (E$_{OT}$). This indicates that a hard-coded middle-rank heuristic fails to capture the optimal global distribution, whereas our OT formulation dynamically finds better alignment.

\begin{table}[!t]
    \centering
    \caption{\textbf{Effect of various expert selection strategies.} \textbf{Bold} indicates the best performance, excluding the base model.}
    \label{tab:ablation-expert}
    \resizebox{\linewidth}{!}{%
    \begin{tabular}{l|cc|cccc}
        \toprule
         & \textbf{WildJB} & \textbf{Strata} & \textbf{Math500} & \textbf{GSM8K} & \textbf{MBPP+} & \textbf{HumanEval} \\
        \midrule
         \multicolumn{7}{c}{\textit{\textbf{Model: DeepSeek-v2-Lite}}} \\
        \midrule
        \rowcolor{gray!15} Base & 43.40 & 70.50 & 24.80 & 55.95 & 46.20 & 42.07 \\ 
        E$_{ALL}$       & 82.00 & \textbf{98.00} & 9.80 & 9.40 & 27.60 & 33.50 \\
        E$_{C_{max}}$  & 70.45 & 88.50 & 21.60 & 45.11 & 39.60 & 32.32 \\
        E$_{C_{mid}}$  & 79.15 & 91.00 & 23.00 & 59.00 & 42.00 & 38.10 \\
        E$_{OT}$       & \textbf{83.05} & 96.00 & \textbf{24.00} & \textbf{61.00} & \textbf{40.20} & \textbf{43.30} \\
        
        \midrule
         \multicolumn{7}{c}{\textit{\textbf{Model: Qwen3-30B-A3B}}} \\
        \midrule
        \rowcolor{gray!15} Base & 75.30 & 88.00 & 90.60 & 96.66 & 75.60 & 92.07 \\ 
        E$_{ALL}$       & 82.00 & \textbf{99.50} & 23.40 & 15.24 & 39.20 & 49.39 \\
        E$_{C_{max}}$   & 92.10 & 93.50 & 35.80 & 37.68 & 66.40 & 83.54 \\
        E$_{C_{mid}}$  & 91.95 & 92.00 & 41.60 & 39.69 & 70.80 & 89.02 \\
        E$_{OT}$      & \textbf{92.40} & 93.50 & \textbf{52.00} & \textbf{49.20} & \textbf{71.80} & \textbf{90.24} \\
       
        \bottomrule
    \end{tabular}%
    }
\end{table}

\textbf{Shift Parameter in Cost Function.} 
To validate the necessity of the shift parameter and determine its optimal value, we conduct ablation studies on $\alpha_{\text{shift}}$. As reported in \cref{tab:ablation-shift}, removing the shift ($\alpha_{\text{shift}}{=}0$) results in suboptimal performance: Strata owns the worst performance at 92.50\% on DeepSeek, and MBPP only reaches 68.00\% on Qwen. This confirms that the absolute head penalty overly restricts the selection space, degrading both safety and utility. 
On DeepSeek, $\alpha_{\text{shift}}{=}20$ yields the optimal trade-off, elevating Strata to 96.00\% and Math500 to 30.60\%. On Qwen, although $\alpha_{\text{shift}}{=}20$ achieves marginally better utility, $\alpha_{\text{shift}}{=}10$ secures the highest safety scores, achieving 99.55\% on WildJB and 99.50\% on Strata, while still clearly surpassing the baseline across all utility metrics. A larger shift over-relaxes the head constraint, causing expert concentration at top-activated positions that hinders topological expansion, as evidenced by the safety degradation on Qwen (e.g., Strata drops to 89.50\% at $\alpha_{\text{shift}}{=}30$). We therefore adopt $\alpha_{\text{shift}}{=}20$ for DeepSeek and $\alpha_{\text{shift}}{=}10$ for Qwen.

\begin{table}[!t]
    \centering
    \caption{\textbf{Ablation on shift parameter $\alpha_{\text{shift}}$.} \textbf{Bold} indicates the best performance, excluding the base model.}
    \label{tab:ablation-shift}
    \resizebox{\linewidth}{!}{%
    \begin{tabular}{l|cc|cccc}
        \toprule
         & \textbf{WildJB} & \textbf{Strata} & \textbf{Math500} & \textbf{GSM8K} & \textbf{MBPP} & \textbf{HumanEval} \\
        \midrule
         \multicolumn{7}{c}{\textit{\textbf{Model: DeepSeek-v2-Lite}}} \\
        \midrule
        \rowcolor{gray!15} $\alpha_{\text{shift}}\!=\!0$  & 95.55 & 92.50 & 26.20 & 67.61 & 45.00 & 46.95 \\
        $\alpha_{\text{shift}}\!=\!10$ & 96.55 & 94.00 & 26.60 & \textbf{68.43} & 45.20 & 46.34 \\
        $\alpha_{\text{shift}}\!=\!20$ & \textbf{96.90} & \textbf{96.00} & \textbf{30.60} & 67.46 & \textbf{45.80} & 47.56 \\
        $\alpha_{\text{shift}}\!=\!30$ & 96.15 & \textbf{96.00} & 27.40 & 67.76 & 45.60 & \textbf{49.39} \\
        \midrule
         \multicolumn{7}{c}{\textit{\textbf{Model: Qwen3-30B-A3B}}} \\
        \midrule
        \rowcolor{gray!15} $\alpha_{\text{shift}}\!=\!0$  & 97.35 & 96.00 & 90.60 & 94.29 & 68.00 & 90.24 \\
        $\alpha_{\text{shift}}\!=\!10$ & \textbf{99.55} & \textbf{99.50} & 90.80 & 95.25 & 72.20 & 92.07 \\
        $\alpha_{\text{shift}}\!=\!20$ & 99.10 & 98.50 & 90.50 & \textbf{95.60} & \textbf{72.80} & \textbf{92.68} \\
        $\alpha_{\text{shift}}\!=\!30$ & 97.10 & 89.50 & \textbf{91.00} & 95.45 & 72.60 & 92.07 \\
        \bottomrule
    \end{tabular}%
    }
\end{table}

\section{Discussions}
\label{sec:discuss}
To investigate the internal mechanism of MESA, we analyze the expert overlap between the identified safety-critical experts ($Mix$) and the top activated experts for HumanEval ($HE$), StrongReject ($SR$), and Strata. Our findings in~\cref{fig:overlap} offer several key insights into the behavior:

\textbf{Intrinsic Separation and Successful Decentralization.} The minimal overlap ($<7\%$) between original safety experts $Mix$ and the coding task $HE$ indicates that intrinsic safety heads lack coding capabilities, explaining their susceptibility to code-based attacks such as Strata, where the overlap further decreases. Notably, MESA achieves a lower $Mix \times Strata$ overlap while maintaining stable $Mix \times HE$ overlap, demonstrating that safety responsibilities are effectively decentralized across a broader expert substrate without compromising critical coding experts.

\textbf{Reasoning Dependency in Defense.} We observe that $HE$ experts generally exhibit higher overlap with $Strata$ than with $SR$. This trend is pronounced in the higher-capacity Qwen model, where $HE$ and $SR$ experts are almost mutually exclusive ($<3\%$). This suggests that defending against code jailbreaks inherently necessitates the recruitment of reasoning-intensive experts, whereas standard semantic refusals rely on distinct, non-reasoning pathways.

\textbf{Synergy via Dual-Capability.} MESA notably increases the overlap between $HE$ and $Strata$ (e.g., rising to 23\% on DeepSeek). This implies that our method effectively injects safety knowledge into code-capable experts. By equipping these reasoning experts with dual capabilities, which maintain coding logic while enforcing safety boundaries, MESA enables a qualitative leap in robustness against complex, out-of-distribution attacks like Strata. Detailed visualized results are shown in~\cref{fig:vis-cross} in the Appendix \ref{sec:visual_example}.

\begin{figure}
    \centering
    \includegraphics[width=1.0\linewidth]{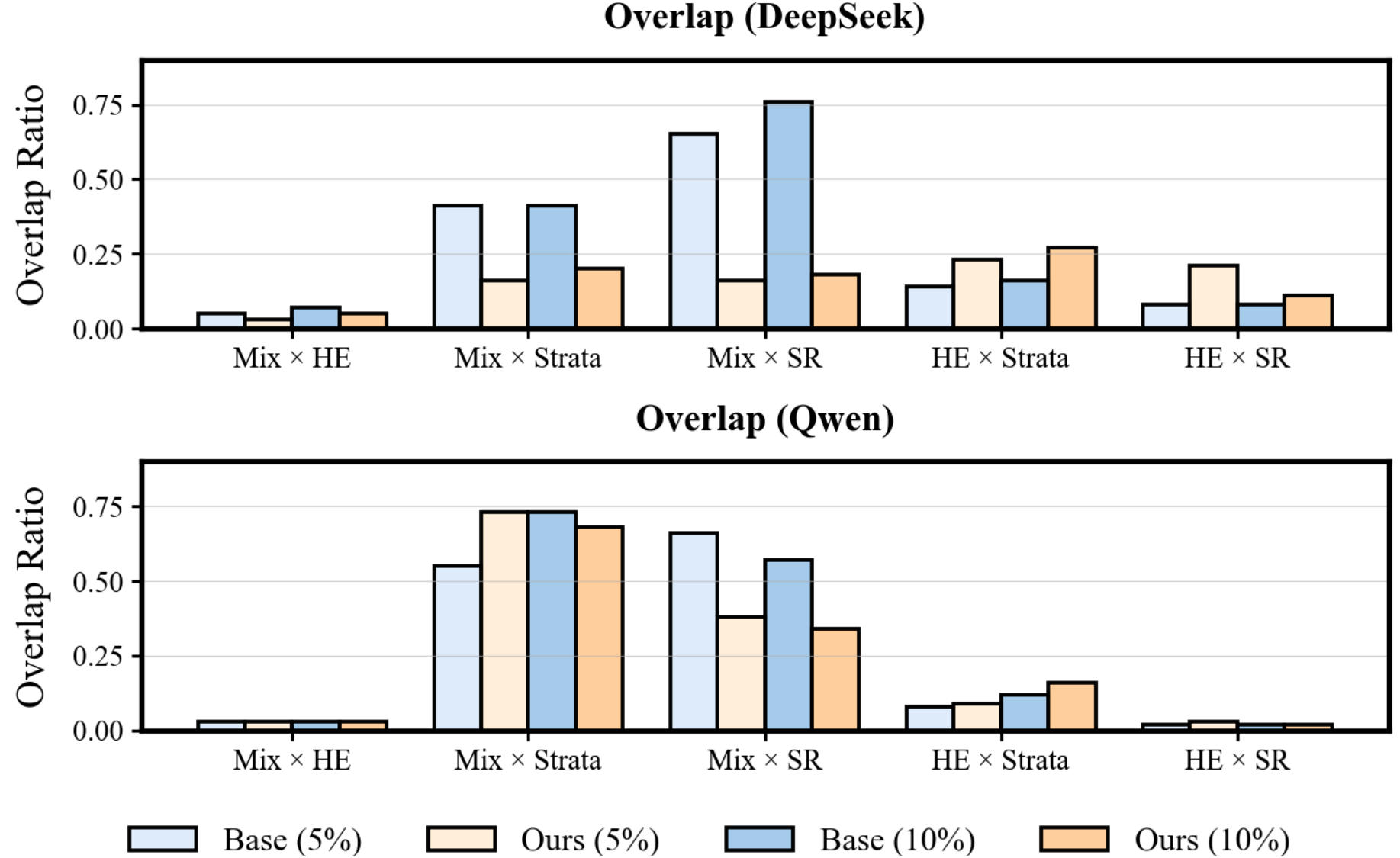}
    \caption{\textbf{Head expert overlap between diverse data sources.} After safety fine-tuning, the head expert overlap between the original safety experts (Mix) and two harmful datasets (Strata and SR) decreases, suggesting decentralization. Meanwhile, overlap between head experts for coding (HE) and both harmful datasets increases, indicating that general critical experts play a more prominent safety role. Notably, larger overlaps are observed between HE and Strata, which contains harder coding-style jailbreak samples.}

    \label{fig:overlap}
\end{figure}

\section{Conclusion}
In this work, we propose MESA, a novel safety alignment framework designed to resolve the trade-off inherent in MoE models. By integrating cost-aware expert reallocation and conditional optimal transport routing, MESA strategically isolates safety adaptation to the optimal expert substrate while strictly preserving the pre-trained topology for general tasks. Our experimental results demonstrate that MESA achieves state-of-the-art safety performance on public benchmarks without compromising general capabilities. Unlike prior approaches that tune the entire parameter space, MESA's selective intervention minimizes interference with pre-acquired knowledge and routing stability. This work marks a paradigm shift from content-centric optimization to structural resource allocation, offering a scalable and mathematically rigorous solution for reconciling safety alignment with general capability preservation in MoE architectures.

\section*{Acknowledgements}
This work was supported in part by the Project of the National Natural Science Foundation of China under Grant 62576020 and in part by the Fundamental Research Funds for the Central Universities.

\section*{Impact Statement}
This paper presents work whose goal is to advance the field of Machine Learning, specifically focusing on the safety alignment of MoE models. Our approach mitigates structural vulnerabilities in sparse architectures to defend against adversarial attacks while preserving general reasoning capabilities. By resolving the trade-off between safety and utility, this work contributes to the development of more reliable and secure AI systems for real-world deployment. There are many potential societal consequences of our work, none of which we feel must be specifically highlighted here.

\bibliography{example_paper}
\bibliographystyle{icml2026}

\newpage
\appendix
\onecolumn

\section{Related Works}

\textbf{Safety risks in dense and MoE architectures.}
To address safety risks~\cite{wang2023decodingtrust,sun2024trustllm,huang2026deceptionbench}, extensive research has been dedicated to safety alignment methodologies. 
Common approaches include supervised fine-tuning (SFT) and direct preference optimization (DPO)~\cite{rafailov2023direct}, which align model outputs with human safety preferences. More advanced methods~\cite{zhang2025stair,cheng2025inverse,duan2025oyster,huang2025safety}, combine reasoning-aware preference optimization to enhance safety boundaries. These techniques generally assume global parameter participation during inference and training, which suits dense models where all parameters are updated uniformly. In contrast, MoE architectures introduce unique safety challenges due to sparse activation and dynamic routing~\cite{wang2025badmoe,fayyaz2025steering}. Only a small subset of experts is activated for each input, meaning safety-critical knowledge can become localized rather than uniformly distributed. This creates a dependency on correct routing to specific safety-critical experts, and small failures in routing can lead to safety degradation. Despite the importance of this problem, very few studies have addressed safety alignment specifically for MoE models. SafeX~\cite{lai2025safex} is the first work that systematically investigates safety risks in MoEs. It formalizes the \textit{positional vulnerability} problem, showing that disabling only a few safety-critical experts can significantly reduce harmful query refusal rates. To mitigate this, SafeX identifies such experts using stability-based selection and reinforces them through additive parameter merging. While SafeX provided foundational insights and a direct alignment solution, its approach remains topologically static, effectively patching existing bottlenecks without expanding the safety routing landscape. In contrast, our method reframes alignment as an expert reallocation problem, constructing a decentralized and resilient routing fabric that withstands adversarial attacks.

\section{Limitations}
While MESA offers a principled framework for reconciling the safety-utility trade-off in MoE architectures, we identify a few areas for further exploration.
First, the integration of the OT solver introduces a theoretical computational overhead during the training phase compared to standard fine-tuning. However, we empirically observe that the solver converges rapidly, rendering the actual wall-clock overhead negligible. We view this slight cost as a necessary investment to ensure the mathematically rigorous reallocation of safety responsibilities, rather than relying on random selection. Crucially, this cost is strictly confined to the training stage; MESA preserves the original sparse activation patterns during inference, ensuring that the deployment efficiency remains unaffected.

Second, while MESA demonstrates robust efficacy across representative MoE configurations, i.e., DeepSeek-V2-Lite (16B) and Qwen3-30B-A3B (30B), our empirical validation has not yet extended to ultra-large-scale foundation models. We acknowledge that the alignment dynamics observed here may not strictly conform to standard scaling laws at the extreme end of the parameter spectrum. Verifying this remains challenging, as the precise computation of Hessian spectra that underpins our theoretical stability guarantees becomes computationally intractable for massive parameter spaces. Therefore, while our framework provides rigorous bounds for current models, applying these exact theoretical diagnostics to trillion-parameter regimes will necessitate the development of more efficient approximation techniques in future work.

\section{Theorem 3.1: Routing Inertia Lower Bound}
\label{sec:theorem1}
\textbf{Theorem 3.1.} \textit{For a gating network $G_\phi$, let $p_i(x)$ be the activation probability of expert $e_i$. The parameter perturbation $\|\Delta \phi\|_2$ required to elevate a tail expert is constrained by the local geometric curvature of the statistical manifold. Specifically, to induce a unit distributional shift $\delta$, the lower bound on parameter updates diverges asymptotically as:}
\begin{equation}
\|\Delta \phi\|_2 \ge \Omega\left(p_i^{-1/2}\right).
\end{equation}

\textit{Proof.} We establish this lower bound by analyzing the local Riemannian geometry of the parameter space induced by the Fisher Information Matrix. Furthermore, we provide a complementary perspective on the global logarithmic barrier imposed by the softmax function.

\textbf{1. Micro-scopic Analysis: Fisher Information and Local Curvature.}
We quantify the distribution shift caused by a parameter perturbation $\Delta \phi$ using the KL-divergence. By second-order Taylor expansion, the divergence is locally approximated by the quadratic form of the Fisher Information Matrix (FIM) $\mathcal{I}(\phi)$:
\begin{equation}
D_{KL}(p_{\phi} \| p_{\phi+\Delta \phi}) \approx \frac{1}{2} \Delta \phi^T \mathcal{I}(\phi) \Delta \phi \le \frac{1}{2} \|\Delta \phi\|_2^2 \cdot \lambda_{\max}(\mathcal{I}(\phi)).
\end{equation}
For a gating mechanism where the expert score (logit) is a differentiable function $h_i(x; \phi)$, the FIM with respect to parameters can be derived via the chain rule. The FIM is defined as $\mathcal{I}(\phi) = \mathbb{E}_{x} \left[ \mathbb{E}_{y \sim p(\cdot|x)} [\nabla_\phi \log p(y|x) (\nabla_\phi \log p(y|x))^T] \right]$. 
For the specific expert $i$, the variance of the score function gradient is scaled by the Bernoulli variance $p_i(1-p_i)$. Thus, the FIM structure is:
\begin{equation}
\mathcal{I}(\phi) \approx \mathbb{E}_{x} \left[ p_i(x)(1 - p_i(x)) \cdot (\nabla_\phi h_i)(\nabla_\phi h_i)^T \right].
\end{equation}
Assuming bounded inputs, i.e., $\|\nabla_\phi h_i\|_2$ is bounded by some constant $B$, the spectral norm (maximum eigenvalue) of the FIM is bounded by:
\begin{equation}
\lambda_{\max}(\mathcal{I}(\phi)) \le B^2 \cdot \max_x [p_i(x)(1 - p_i(x))].
\end{equation}
As the expert enters a dormant state, the term $(1-p_i) \to 1$, and thus the curvature vanishes linearly: $\lambda_{\max}(\mathcal{I}(\phi)) = \mathcal{O}(p_i)$.

To achieve a target unit of statistical change $D_{KL} \ge \delta^2$, we substitute the spectral bound back into Eq. (1):
\begin{equation}
\delta^2 \le \frac{1}{2} \|\Delta \phi\|_2^2 \cdot C p_i \implies \|\Delta \phi\|_2 \ge \frac{\delta \sqrt{2}}{\sqrt{C p_i}}.
\end{equation}
This yields the asymptotic lower bound stated in the theorem:
\begin{equation}
\|\Delta \phi\|_2 = \Omega(p_i^{-1/2}).
\end{equation}
This result indicates that as $p_i \to 0$, the parameter update magnitude required to effect a meaningful distributional shift grows locally unbounded due to the vanishing gradients.

\textbf{2. Complementary Perspective: Global Logarithmic Barrier.}
While the local curvature dictates the $\Omega(p^{-1/2})$ bound, the global structure of the softmax mechanism imposes an additional distance constraint. Let the expert be in a dormant state $p_i^{(0)} \approx 0$ and the target state be $p_i^{(t)} \ge \epsilon$. In the logit space, $\log p_i = h_i - C$, where $C = \log \sum_{j} \exp(h_j)$ is the log-normalization constant. Note that $\frac{\partial \log p_i}{\partial h_i} = 1 - p_i$. In the dormant regime where $p_i \ll 1$, the normalization term $C$ is dominated by other active experts and remains insensitive to small changes in $h_i$ (i.e., $\frac{\partial \log p_i}{\partial h_i} \approx 1$). Assuming the contribution of other experts to the partition function remains stable, the required logit shift is well-approximated by:
\begin{equation}
\Delta h_i \approx \log(p_i^{(t)}) - \log(p_i^{(0)}).
\end{equation}
Since logits are Lipschitz continuous with respect to parameters ($|\Delta h_i| \le L \|\Delta \phi\|_2$), we derive the parameter distance lower bound:
\begin{equation}
\|\Delta \phi\|_2 \ge \frac{1}{L} \left| \log(\epsilon) - \log(p_i^{(0)}) \right|.
\end{equation}
This establishes that strictly activating a dormant expert also requires overcoming a \textit{logarithmic barrier} in parameter distance. However, in the limit $p_i \to 0$, the polynomial divergence $p_i^{-1/2}$ (from the local curvature analysis) grows faster than the logarithmic term $|\log p_i|$, making the local Fisher geometry the dominant constraint for routing inertia.

\textbf{Conclusion.}
We have proven that the required parameter perturbation is lower-bounded by $\Omega(p_i^{-1/2})$ due to the degenerate local geometry of the statistical manifold, confirming the high routing inertia for tail experts. $\hfill \square$

\section{Theorem 3.2: Hessian-Induced Stability Bound}
\label{sec:theorem2}
\textbf{Theorem 3.2.} \textit{Let $\mathcal{L}_g$ be the general utility loss. The expected degradation due to perturbing expert $e_i$ is upper-bounded by the product of its marginal utilization $\bar{p}_i$ and effective Hessian spectral norm $\Lambda_i$.}

\textit{Proof.} Let $\mathcal{L}_g(\theta)$ be the general utility loss. Assuming pre-trained weights $\theta^*$ reside in a local minimum ($\nabla \mathcal{L}_g \approx \mathbf{0}$), the expected loss degradation under a perturbation $\Delta \theta_i$ is governed by the second-order Taylor expansion:
\begin{equation}
\begin{aligned}
\mathbb{E}_{x}[\Delta \mathcal{L}_g] &\approx \mathbb{E}_{x} \left[ \frac{1}{2} p_i(x) \Delta \theta_i^T \mathbf{H}_i(x) \Delta \theta_i \right] \\
&\le \frac{1}{2} \|\Delta \theta_i\|_2^2 \cdot \mathbb{E}_{x} \left[ p_i(x) \|\mathbf{H}_i(x)\|_2 \right] \\
&\le \frac{1}{2} \bar{p}_i \|\Delta \theta_i\|_2^2 \cdot \Lambda_i,
\end{aligned}
\end{equation}
where $\bar{p}_i = \mathbb{E}_x[p_i(x)]$ is the marginal utilization and $\Lambda_i = \sup_{x: p_i(x)>0} \|\mathbf{H}_i(x)\|_2$ represents the worst-case curvature in the active region. We analyze the stability risk $R_i = \bar{p}_i \Lambda_i$ in two regimes:

\textbf{Case 1: Head Regime.} As $\bar{p}_i \to 1$, the expert benefits from dense gradient signals, converging to flat minima where the curvature is bounded by a small constant $C$, i.e., $\Lambda_i \to C$. The risk remains stable:
\begin{equation}
\lim_{\bar{p}_i \to 1} R_i \propto 1 \cdot C = \mathcal{O}(1).
\end{equation}

\textbf{Case 2: Tail Regime.} As $\bar{p}_i \to 0$, the expert suffers from insufficient training (sparse sampling). Due to the lack of smoothing effects from diverse data, the local curvature sharpens drastically. Modeling this as $\Lambda_i \sim \Theta(\bar{p}_i^{-\gamma})$ with $\gamma > 1$ (reflecting the inverse correlation between sharpness and sample size), the bound behaves as:
\begin{equation}
\lim_{\bar{p}_i \to 0} R_i \propto \lim_{\bar{p}_i \to 0} \bar{p}_i \cdot \bar{p}_i^{-\gamma} = \lim_{\bar{p}_i \to 0} \bar{p}_i^{1-\gamma} = \infty.
\end{equation}
Thus, the theoretical risk is minimal at the head but unbounded at the tail, indicating high vulnerability to perturbations. $\hfill \square$

\section{Implementation Details of Baselines}
\label{sec:impl}

We compare our method against four safety alignment baselines, including supervised fine-tuning (SFT), GRPO~\cite{shao2024deepseekmath}, STAIR~\cite{zhang2025stair}, and SafeX~\cite{lai2025safex}. To quantify the gains brought by these safety-alignment methods, we additionally evaluate the corresponding \emph{base} models (i.e., without any post-training). Unless stated otherwise, all experiments are conducted on two representative MoE models: \texttt{DeepSeek-V2-Lite-Chat} and \texttt{Qwen3-30B-A3B}.

\paragraph{Training framework.}
To ensure a controlled comparison, we implement all methods on top of HuggingFace TRL\footnote{\url{https://github.com/huggingface/trl}}. Accordingly, SFT, GRPO, STAIR (Stage~1 SFT and Stage~2 DPO), and SafeX are all trained using TRL trainers. In particular, our SafeX implementation follows the original literature and applies LoRA only to a subset of candidate experts.

\paragraph{Datasets.}
For most baselines, we use a unified 30K training set for fair comparison. Concretely, unless otherwise specified, SFT, GRPO, and SafeX are trained on a curated 30K dataset constructed by uniformly sampling 15K instances from \textsc{SafeRLHF}~\cite{dai2023safe} (safety-oriented) and 15K instances from \textsc{UltraFeedback}~\cite{cui2023ultrafeedback} (general instruction-following), and then merging them. Moreover, for these methods, we use only final answers as completions (i.e., without explicit reasoning traces).
However, STAIR Stage~1 requires multi-stage reasoning supervision; therefore, we replace all training samples with versions that include multi-stage reasoning traces, while keeping the total size fixed at 30K. Subsequently, for STAIR Stage~2, we follow the original STAIR setup to generate and filter preference data, resulting in 30K preference pairs for DPO training.

\paragraph{Shared training setup.}
Across all methods, we use a maximum sequence length of 1{,}024 tokens. In addition, all models are trained on 8$\times$ NVIDIA H20 GPUs (141\,GB VRAM each) with DeepSpeed ZeRO Stage~3 and \texttt{bf16} mixed precision. We set the global batch size to 256, use AdamW, and otherwise keep TRL default settings unless explicitly stated.

\paragraph{Method-specific settings.}
With the shared setup in place, we now describe method-specific hyperparameters (also summarized in \cref{tab:train-config}). SFT (including STAIR Stage~1) uses a learning rate of $2\times 10^{-5}$ and is trained for 500 steps.
In contrast, GRPO uses a learning rate of $1\times 10^{-6}$ for 250 steps, with 4 rollout samples and other GRPO hyperparameters kept at TRL defaults; for evaluation, we use the checkpoint at 90 steps where performance saturates.
Next, STAIR Stage~2 is trained with DPO using TRL defaults and a learning rate of $1\times 10^{-6}$ for 300 steps.
Finally, for SafeX, we adopt its additive weight merging strategy applied to the union of Identification and Control Experts ($E_{id} \cup E_{ctrl}$). Specifically, we perform LoRA tuning with learning rate $1\times 10^{-4}$ (following TRL recommendations for LoRA) for 500 steps. We set LoRA rank $r{=}16$, $\alpha{=}8$, and dropout 0.1. Following the original definition of the union of Identification and Control Experts, we select the 15 most activated experts as candidate experts to be adapted. For our method MESA, we perform full parameter fine-tuning, where we set the learning rate to $1\times 10^{-4}$ and train for 500 steps, maintaining the same training duration.

\begin{table}[h]
 \caption{
    Summary of key hyperparameters for baseline training.
  }
  \centering
  \small
  \setlength{\tabcolsep}{6pt}
  \begin{tabular}{lcccc}
    \toprule
    \textbf{Method} & \textbf{Tuning} & \textbf{Train data} & \textbf{LR} & \textbf{Steps} \\
    \midrule
    Base            & --            & --                        & --               & --  \\
    SFT             & Full Finetune & 30K mixed (w/o traces)    & $2\times10^{-5}$ & 500 \\
    GRPO            & Full Finetune & 30K mixed (w/o traces)    & $1\times10^{-6}$ & 250 \\
    STAIR (Stage~1) & Full Finetune & 30K mixed (w/z traces)    & $2\times10^{-5}$ & 500 \\
    STAIR (Stage~2) & DPO           & 30K filtered (preference) & $1\times10^{-6}$ & 300 \\
    SafeX           & LoRA Finetune & 30K mixed (w/o traces)    & $1\times10^{-4}$ & 500 \\
    MESA(ours)           & Full Finetune & 30K mixed (w/o traces)    & $1\times10^{-4}$ & 500 \\
    \bottomrule
  \end{tabular}
  \label{tab:train-config}
\end{table}

\section{Rationale of Cost Function Design}
\label{app:cost-function}

We evaluate representative parameter pairs on Qwen3-30B-A3B, corresponding to various shapes. As reported in \cref{tab:cost-shape}, all valid asymmetric pairs ($a{=}2, b{=}3$ and $a{=}3, b{=}4$) yield strong and stable performance, with safety scores performing the best alongside competitive utility. In contrast, violating the asymmetry constraint ($a{=}2, b{=}2$) causes Strata to drop to 94.50\%, and further relaxation into monotonic forms ($a{=}1, b{=}2$ and $a{=}0, b{=}1$) consistently degrades both safety and utility: Strata declines to 92.50\% and 93.00\%, while MBPP drops to less than 70.00\%. This confirms that the asymmetric U-shape is a structural necessity. For the choices of values, we adopt the lowest-order solution ($a{=}2, b{=}3$) to avoid over-parameterization, where higher-order forms degenerate toward symmetric, concentrated distributions, losing the required asymmetric shoulder while drastically amplifying the penalty on the head.

\begin{table}[!h]
    \centering
    \footnotesize
    \caption{Sensitivity to cost function parameters on Qwen3-30B-A3B. \textbf{Bold} indicates the best performance.}
    \label{tab:cost-shape}
    \begin{tabular}{l|cc|cccc}
        \toprule
        \textbf{Parameters} & \textbf{WildJB} & \textbf{Strata} & \textbf{Math500} & \textbf{GSM8K} & \textbf{MBPP} & \textbf{HumanEval} \\
        \midrule
         \multicolumn{7}{c}{\textit{Valid: Asymmetric U-shape ($a \geq 2,\; b > a$)}} \\
        \midrule
        $a{=}2,\, b{=}3$ & \textbf{99.10} & 98.50 & 90.50 & 95.60 & \textbf{72.80} & \textbf{92.68} \\
        $a{=}3,\, b{=}4$ & 98.65 & \textbf{99.00} & \textbf{91.00} & \textbf{96.36} & 70.40 & 91.46 \\
        \midrule
         \multicolumn{7}{c}{\textit{Violating: Symmetric or monotonic}} \\
        \midrule
        $a{=}2,\, b{=}2$ (Symmetric) & 98.50 & 94.50 & 90.40 & 95.37 & 71.00 & 90.24 \\
        $a{=}1,\, b{=}2$ (Unbounded monotonic) & 98.20 & 92.50 & 90.60 & 95.52 & 69.00 & 91.46 \\
        $a{=}0,\, b{=}1$ (Linear monotonic) & 96.95 & 93.00 & 90.00 & 95.40 & 67.80 & 90.24 \\
        \bottomrule
    \end{tabular}
\end{table}

\section{Stability of Empirical Activation Priors}
\label{sec:stability-prior}

To validate stability with respect to the empirical activation priors, we first compared expert selection results derived from a small subset (100 samples) vs. a larger, non-overlapping subset (1,000 samples). The identified expert sets achieved an 82.40\% mutual hit rate on Qwen and 84.74\% on DeepSeek, demonstrating that empirical functional sparsity converges rapidly and remains stable across different sampling scales.

We further execute MESA using three independently sampled subsets from the calibration data and report the results in \cref{tab:stability}. Across both models, the standard deviations remain consistently small, where safety metrics exhibit deviations below $3.1\%$, and utility metrics stay within $2.9\%$. This confirms that MESA is insensitive to the particular subset chosen for prior estimation. Notably, even the lower bounds of performance mostly remain superior to all baseline methods, further substantiating the robustness and reliability of MESA.

\begin{table}[!h]
\centering
\caption{Performance of MESA across three independently sampled subsets for computing empirical activation priors (mean$\pm$std). Results demonstrate that MESA remains robust regardless of the subset used for prior estimation.}
\label{tab:stability}
\footnotesize
\begin{tabular}{l|cc|cccc}
\toprule
& \textbf{WildJB} & \textbf{Strata} & \textbf{Math500} & \textbf{GSM8K} & \textbf{MBPP} & \textbf{HumanEval} \\
\midrule
DS (Ours) & $93.63_{\pm3.04}$ & $95.67_{\pm0.58}$ & $29.80_{\pm0.92}$ & $68.18_{\pm1.95}$ & $45.53_{\pm0.31}$ & $45.27_{\pm2.85}$ \\
\midrule
Qwen (Ours) & $97.98_{\pm0.97}$ & $98.67_{\pm0.29}$ & $90.90_{\pm0.36}$ & $96.03_{\pm0.42}$ & $72.13_{\pm2.43}$ & $93.09_{\pm1.26}$ \\
\bottomrule
\end{tabular}%
\end{table}

\section{Robustness across Different MoE Configurations}
\label{app:more-robustness}

We analyze MESA's robustness from two additional MoE configurations: expert count/architecture and routing temperature.

\begin{table}[!h]
    \centering
    \caption{Robustness to routing scaling factor $s$ on DeepSeek-v2-Lite. \textbf{Bold} indicates the best performance.}
    \label{tab:routing-scale}
    \footnotesize
    \begin{tabular}{l|cc|cccc}
        \toprule
         & \textbf{WildJB} & \textbf{Strata} & \textbf{Math500} & \textbf{GSM8K} & \textbf{MBPP} & \textbf{HumanEval} \\
        \midrule
         \multicolumn{7}{c}{\textit{routed scaling factor $s = 0.5$}} \\
        \midrule
        \rowcolor{gray!15} DS (Base) & 40.15 & 70.50 & 18.00 & 51.02 & \textbf{35.40} & 16.46 \\
        DS (Ours) & \textbf{93.05} & \textbf{94.00} & \textbf{18.40} & \textbf{51.03} & 34.90 & \textbf{25.61} \\
        \midrule
         \multicolumn{7}{c}{\textit{routed scaling factor $s = 1.5$}} \\
        \midrule
        \rowcolor{gray!15} DS (Base) & 49.40 & 68.50 & 18.40 & 53.53 & 22.40 & 24.39 \\
        DS (Ours) & \textbf{96.05} & \textbf{97.00} & \textbf{18.60} & \textbf{56.34} & \textbf{24.40} & \textbf{35.98} \\
        \bottomrule
    \end{tabular}%
\end{table}

\paragraph{Varying Expert Counts and Architectures.}
DeepSeek-v2-Lite and Qwen3-30B-A3B employ different expert counts and expert mechanisms (e.g., top-$k$ selection, shared vs.\ non-shared expert design). MESA's consistently strong performance across both models in the main results directly validates its structural robustness to varying expert configurations.

\paragraph{Varying Routing Temperatures.}
Since Qwen3-30B-A3B performs a direct softmax without such a scaling mechanism, we only perturb the routing distribution by adjusting the routed scaling factor $s$ on DeepSeek-v2-Lite during inference, which controls the relative contribution of routed experts versus the shared expert during inference \textit{(a smaller $s$ down-weights routed experts while a larger $s$ amplifies them)}. As shown in \cref{tab:routing-scale}, MESA consistently maintains strong safety alignment with minimal utility degradation, all safety scores remaining over 90.00\% while the base model exists a sharp drop. These results confirm that MESA's robustness of safety expert reassignment.

\section{Sensitivity to Hyperparameters}
\label{sec:hyper}

\begin{figure}[!h]
\begin{minipage}{0.48\textwidth}
    \centering
    \includegraphics[width=\linewidth]{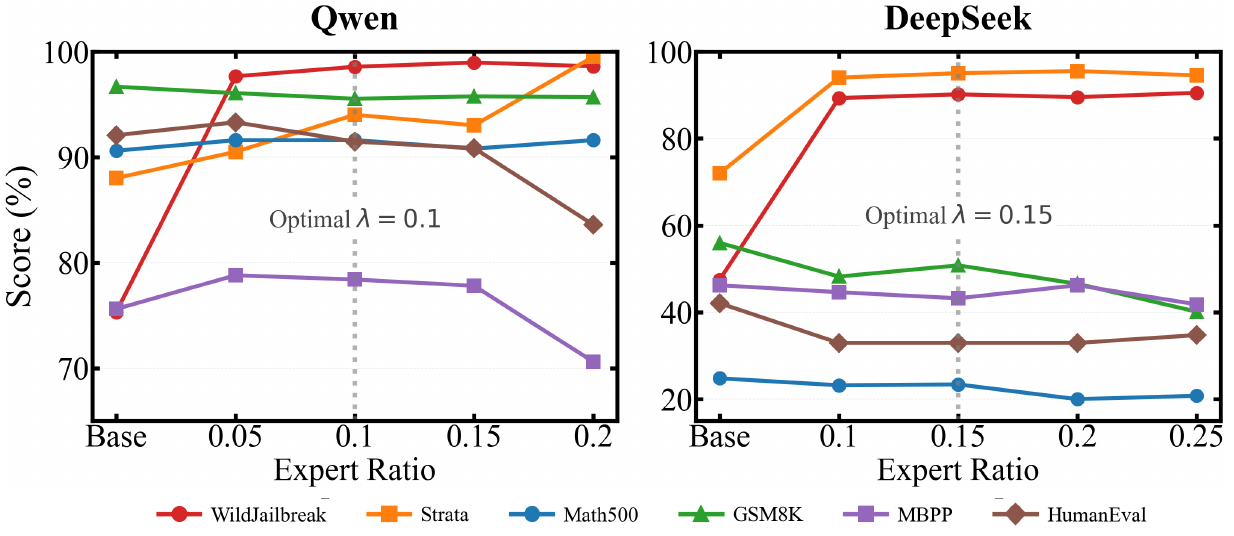}
    \caption{Effect of expert ratio $w$ on fine-tuning performance.}
    \label{fig:exp-expert}
\end{minipage}%
\hfill
\begin{minipage}{0.48\textwidth}
    \centering
    \captionof{table}{Ablation study on the routing loss coefficient $\gamma$.}
    \label{tab:ablation-router}
    \resizebox{\linewidth}{!}{%
    \setlength{\tabcolsep}{6pt}
    \begin{tabular}{l|cc|ccc}
    \toprule
    & \textbf{WildJB} & \textbf{Strata} & \textbf{Math500} & \textbf{GSM8K} & \textbf{HumanEval} \\
    \midrule
    \multicolumn{6}{c}{\textbf{\textit{Model: DeepSeek-v2-Lite}}} \\
    \midrule
    \rowcolor{gray!15} Base & 47.50 & 72.00 & 24.80 & 55.95 & 42.07 \\
    $\gamma=0.5$ & 90.70 & 93.00 & 28.00 & 64.59 & 39.02 \\
    $\gamma=1.0$ \checkmark & 90.90 & 95.00 & 28.80 & 66.11 & 42.07 \\
    $\gamma=1.5$ & 90.80 & 91.50 & 28.60 & 64.90 & 43.90 \\
    $\gamma=2.0$ & 89.95 & 92.00 & 28.80 & 65.88 & 41.46 \\
    \midrule
    \multicolumn{6}{c}{\textbf{\textit{Model: Qwen3-30B-A3B}}} \\
    \midrule
    \rowcolor{gray!15} Base & 75.30 & 88.00 & 90.60 & 96.66 & 92.07 \\
    $\gamma=0.5$ & 97.00 & 99.50 & 90.60 & 96.21 & 91.46 \\
    $\gamma=1.0$ & 97.00 & 97.50 & 91.60 & 96.51 & 92.07 \\
    $\gamma=1.5$ \checkmark & 97.65 & 99.00 & 91.00 & 96.44 & 94.51 \\
    $\gamma=2.0$ & 96.90 & 96.50 & 90.60 & 95.91 & 92.07 \\
    \bottomrule
    \end{tabular}%
    }
\end{minipage}
\end{figure}

We examine the sensitivity of MESA to two key hyperparameters: the ratio of experts selected for optimization ($w$) and the coefficient of the routing constraint ($\gamma$) to determine the optimal configuration for balancing safety and utility.

\textit{Expert Ratio ($w$):}  We first explore the impact of the proportion of fine-tuning experts as \cref{fig:exp-expert}, where we observe that safety performance improves rapidly with a minimal inclusion of selected experts and saturates quickly. Adhering to a safety-first principle while preserving utility and accounting for the intrinsic expert scale of models, we identify $w = 0.15$ for DeepSeek and $w = 0.1$ for Qwen as the final configurations.

\textit{Routing Loss Coefficient ($\gamma$):} 
We also investigate the sensitivity to the routing loss coefficient $\gamma$, which regulates the topological constraint intensity. As shown in \cref{tab:ablation-router}, while moderate values generally outperform extremes, our specific choices are driven by a safety-first principle paired with utility maximization. These configurations yield high safety score. Crucially, these selected points also demonstrate the most robust general utility: $\gamma=1.0$ on DeepSeek fully recovers HumanEval performance of 42.07\% while maximizing GSM8K of 66.11\%, and $\gamma=1.5$ on Qwen achieves a remarkable HumanEval score of 94.51\%. Thus, these hyperparameters enforce necessary safety structures without compromising the model's complex reasoning pathways.

\section{More Visualized Results}
\label{sec:visual_example}
In this section, we provide more detailed visualized results to support our findings or conclusions.~\cref{fig:sparsity} denotes the expert activation frequencies in Qwen3-30B-A3B. The heatmaps display layer-wise routing patterns induced by safety data (left) and general data (right), highlighting both functional sparsity and the activation asymmetry between the two domains. And~\cref{fig:vis-cross} shows the specific overlap experts across different data sources in~\cref{sec:discuss}.

\begin{figure*}[!h]
  \vskip 0.2in
  \begin{center}
    \centerline{\includegraphics[width=0.7\linewidth]{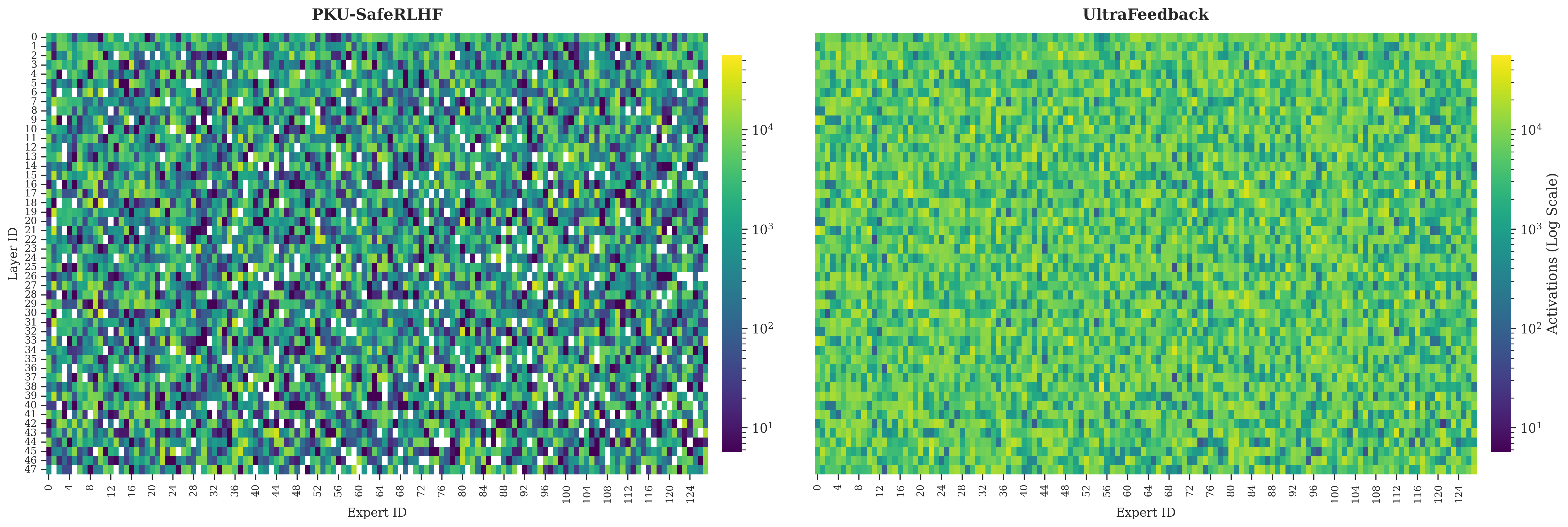}}
    \caption{
       Visualization of expert activation frequencies in Qwen3-30B-A3B.
    }
    \label{fig:sparsity}
  \end{center}
\end{figure*}

\begin{figure*}[!t]
  \vskip 0.2in
  \begin{center}
    \centerline{\includegraphics[width=\linewidth]{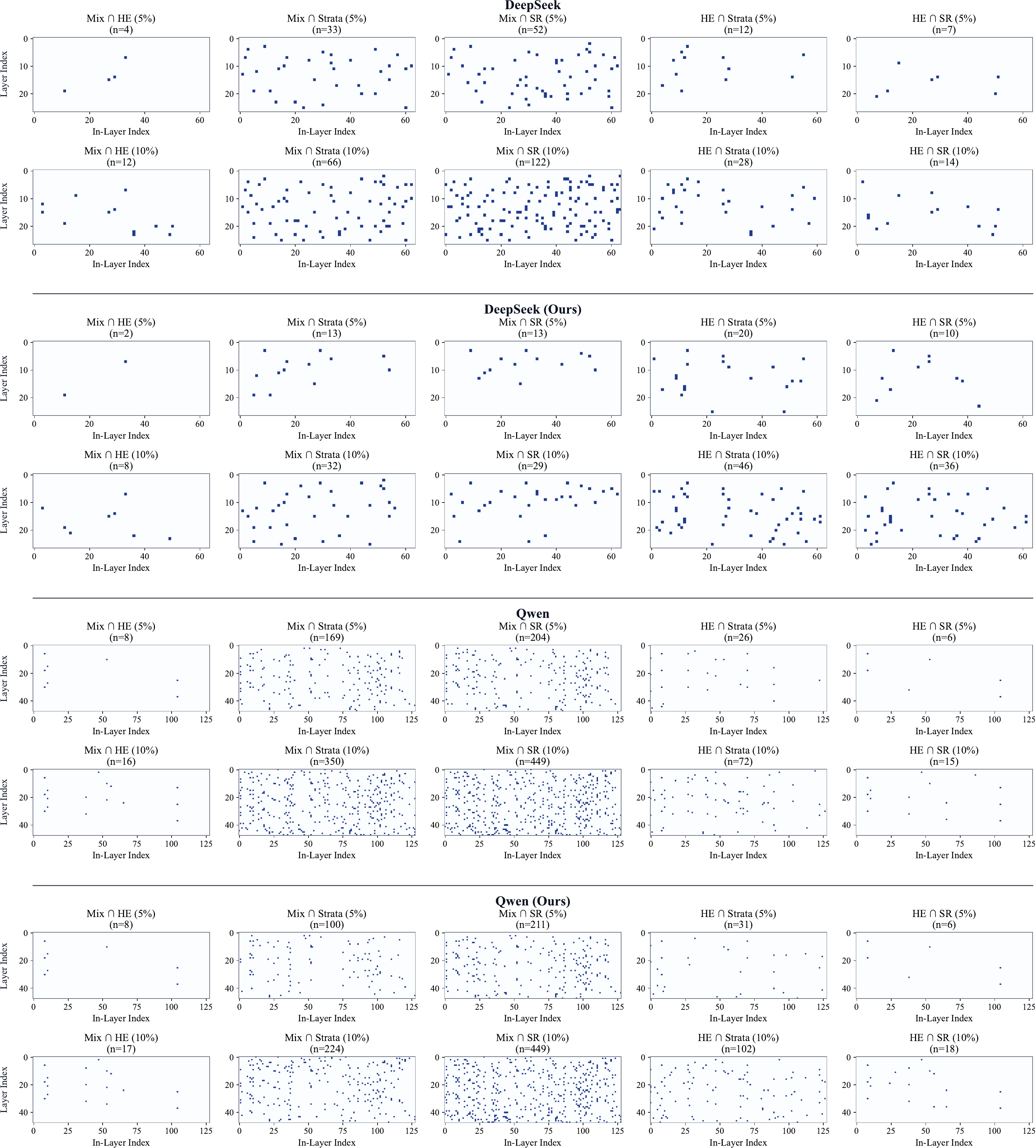}}
    \caption{
        Detailed activation map of expert overlap, which is the visualized result of~\cref{sec:discuss}.
    }
    \label{fig:vis-cross}
  \end{center}
\end{figure*}

\end{document}